\theoremstyle{plain}
\newtheorem{theorem}{Theorem}[section]
\newtheorem{proposition}[theorem]{Proposition}
\theoremstyle{definition}
\theoremstyle{remark}
\icmltitlerunning{SENSOR: Imitate Third-Person Expert's Behaviors via Active Sensoring}
\begin{document}

\twocolumn[
\icmltitle{SENSOR: Imitate Third-Person Expert's Behaviors via Active Sensoring}



\icmlsetsymbol{equal}{*}

\begin{icmlauthorlist}
\icmlauthor{Kaichen Huang}{equal,yyy,comp}
\icmlauthor{Minghao Shao}{equal,yyy,comp}
\icmlauthor{Shenghua Wan}{yyy,comp}
\icmlauthor{Hai-Hang Sun}{yyy,comp}
\icmlauthor{Shuai Feng}{yyy}
\icmlauthor{Le Gan}{yyy,comp}
\icmlauthor{De-Chuan Zhan}{yyy,comp}
\end{icmlauthorlist}

\icmlaffiliation{yyy}{National Key Laboratory for Novel Software Technology, Nanjing University, China
}
\icmlaffiliation{comp}{School of Artificial Intelligence, Nanjing University, China}

\icmlcorrespondingauthor{Kaichen Huang}{huangkc@lamda.nju.edu.cn}
\icmlcorrespondingauthor{Minghao Shao}{shaomh@lamda.nju.edu.cn}

\icmlkeywords{Machine Learning, ICML}

\vskip 0.3in
]




\begin{abstract}
In many real-world visual Imitation Learning (IL) scenarios, there is a misalignment between the agent's and the expert's perspectives, which might lead to the failure of imitation. Previous methods have generally solved this problem by domain alignment, which incurs extra computation and storage costs, and these methods fail to handle the \textit{hard cases} where the viewpoint gap is too large. To alleviate the above problems, we introduce active sensoring in the visual IL setting and propose a model-based \textbf{SENS}ory imitat\textbf{OR} (SENSOR) to automatically change the agent's perspective to match the expert's. SENSOR jointly learns a world model to capture the dynamics of latent states, a sensor policy to control the camera, and a motor policy to control the agent. Experiments on visual locomotion tasks show that SENSOR can efficiently simulate the expert's perspective and strategy, and outperforms most baseline methods.
\end{abstract}

\section{Introduction and related works}

Visual imitation learning (IL) \cite{lopes2005visual,finn2017one,li2017infogail,tamar2018imitation,rafailov2021visual} deals with situations where agents make decisions based on visual observation (images or videos) rather than true states \cite{xu2017end,sallab2017deep,jaderberg2019human,schrittwieser2020mastering}, and utilize off-the-shelf expert visual demonstration instead of elaborate reward functions. Visual demonstration is a more natural resource for imitation in real-world scenarios, but good performance requires that the agent and expert share a common viewing perspective. In practice, a mismatch often occurs in perspectives when the cameras on the demonstrator and the imitator have different types, positions, or tuning parameters. This comes to a third-person imitation learning setting \cite{shang2021self,garello2022towards,klein2023active} where observations obtained from the demonstrations are not the same as what the imitator agent will be faced with. Manually tuning the agent's camera is non-trivial and requires expensive expertise, so it's crucial to eliminate the mismatch automatically.

\begin{figure}[t]
\centering
\includegraphics[width=0.5\textwidth]{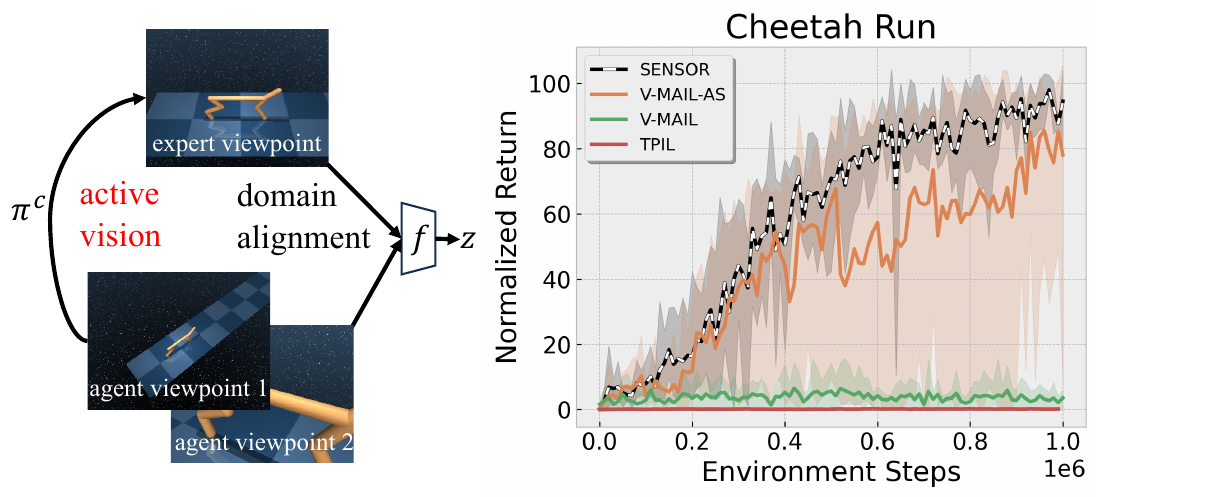}

\caption{\textbf{Left}: Intuitive distinction between observations rendered with different viewpoints. The top is the expert viewpoint, while the bottom is a poorer agent viewpoint. Domain adaptation methods learn an encoder $f$ to map different observations to the same embedding $z$, while active vision explicitly adjusts the agent's viewpoint by taking a sensor policy $\pi^c$. \textbf{Right}: We evaluate SENSOR and other rival methods over three seeds in Cheetah Run under two hard initial perspectives shown in the left. We report the mean (solid) and standard deviation (shaded) of normalized return. SENSOR beats other methods on both performance and stability levels.}
\label{f1}
\end{figure}

Domain alignment \cite{kim2019cross,kim2020domain,raychaudhuri2021cross,franzmeyer2022learn,yu2018one} is a classic way to tackle the gap between perspectives in third-person IL. Several third-person IL methods combines adversarial Imitation Learning along with methods to align expert and agent observation domain. TPIL \cite{stadie2017third} borrows idea from domain confusion and generative adversarial networks (GANs) \cite{goodfellow2014generative}, which uses a discriminator to distinguish between the embedding of expert and agent observation to align them into one subspace. Hierarchical Controller \cite{sharma2019third} generates subgoals on agent domain from expert domain through a GAN-like module, and requires expert demonstrations from both perspective domains to perform alignment. DisentanGAIL \cite{cetin2021domain} uses a mutual information constraint to prevent the policy discriminator to exclusively rely on domain information and requires extra random demonstrations for additional learning signal. Although these methods align views with different ways, they bring extra computation and storage costs and fail to fix the over-large gap between different viewpoints. 

This raises an interesting question: can we design an algorithm that let the agent \textit{\textbf{adjust its perspective automatically to match the expert's}}, thus solving the hard cases of large perspective gap \textit{\textbf{without viewpoint knowledge}}? We try to answer this question through active vision (also called active sensoring). Active vision methods \cite{bajcsy1988active,chen2011active,cheng2018reinforcement,fujita2020distributed,shang2023active,ActiveS2} in visual reinforcement learning get better environmental information by controlling the viewpoint of the camera. The idea of applying active vision is natural. Humans accomplish tasks through adopting motor behaviors as well as adapting their viewpoints. Similarly in IL, humans can learn by observing examples of experts with different perspectives and actively adopting the appropriate perspective to accomplish the task. Bring active vision into visual IL setting shows the ability to align the agent's viewpoint consistent with the expert's, thus reducing third-person IL problem into a simple IL case. Under the activate vision IL framework, we propose a model-based SENSory imitatOR (SENSOR) algorithm, which explicitly models the perspective as learnable parameters and designs a sensor policy to adjust the camera until the agent's viewpoint aligns with the expert's.

Our main contributions are summarized as follows:

\textbf{1)} $\,\,$ We provide insights into understanding domain alignment methods by quantifying the task's difficulty with mutual information. \\
\textbf{2)} $\,\,$ To the best of our knowledge, we are the first to introduce active sensoring in the visual IL setting to tackle IL problems from different viewpoints. \\
\textbf{3)} $\,\,$ We propose model-based \textbf{SENS}ory imitat\textbf{OR} (SENSOR) that bridge the viewpoint gap through active sensoring, and we theoretically analyze the limitations of decoupled dynamics. \\
\textbf{4)} $\,\,$ We conduct extensive locomotion experiments on DMC Suite \cite{tassa2018deepmind} with different perspectives. SENSOR produces excellent results, demonstrating the significance of active sensoring.

\section{Preliminaries}
\label{s-preliminary}

\subsection{Active Vision Framework}
We use the camera as the sensor and accept visual observations as input, thus we adopt the Partially-Observed Markov Decision Process (POMDP) setting, which can be described with the tuple: $\mathcal{M}=(\mathcal{S},\mathcal{A},\mathcal{O},\mathcal{P},r,p_0,\gamma,\phi)$, where $\mathcal{S}$ is the state space, $\mathcal{A}$ is the action space, $\mathcal{O}$ is the observation space, $\mathcal{P}\colon\mathcal{S}\times\mathcal{A}\rightarrow\mathcal{S}$ is the transition function, $r\colon\mathcal{S}\times\mathcal{A}\times\mathcal{S}\rightarrow\mathbb{R}$ is the reward function, $\gamma\in[0,1]$ is the discount factor and $p_0$ is the initial state distribution. Observations are generated through the emission function $o\sim\phi(o|s)$. 

The active vision framework allows agent to autonomously adjust its viewpoint by taking sensory actions, requiring us to define some perspective-related elements. We introduce a set of supplementary settings based on POMDP, which can be described with the tuple: $\mathcal{M}_{\text{supp}}=(\mathcal{Z},\mathcal{C},\mathcal{A}^z,\mathcal{A}^c,f_a)$, where $\mathcal{Z}$ is the motor state space, $\mathcal{C}$ is the sensor state space, $\mathcal{A}^z$ is the motor action space, $\mathcal{A}^c$ is the sensor action space, and $f_a\colon\mathcal{A}^z\times\mathcal{A}^c\rightarrow\mathcal{A}$. Based on $\mathcal{A}^z$, which is the original action space in $\mathcal{M}$, we introduce the sensor action space $\mathcal{A}^z$ and redefine $\mathcal{A}$ as the union of $\mathcal{A}^z$ and $\mathcal{A}^c$ using $f_a$. Similarly, we believe that state $s\in\mathcal{S}$ contains both the motor and sensor information, thus we assume that $\mathcal{S}=\mathcal{Z}\times\mathcal{C}$.

Given a fixed viewpoint $c^*$, we derive a subspace $\mathcal{O}^*$ of observation through $\mathcal{S}^*=\mathcal{Z}\times\{c^*\}$ and emission function. In our assumption the expert holds a fixed viewpoint $c^e$, and the agent has a viewpoint $c^a$ as initialization. Then we define the expert observation subspace $\mathcal{O}^e$ and the agent initial observation subspace $\mathcal{O}^a$:
\begin{align}
    \mathcal{O}^e &= \{o^e | o^e\sim \phi(o|s^e) = \phi(o|z,c^e) \} \\
    \mathcal{O}^a &= \{o^a | o^a\sim \phi(o|s^a) = \phi(o|z,c^a) \}
\end{align}
where $z$ is sampled from the fixed $\mathcal{Z}$.

\subsection{Adversarial Imitation Learning}
\label{s-AIL}
Adversarial Imitation Learning(AIL) \cite{baram2016model,ho2016generative,AIL2,sun2019adversarial,zolna2021task} comes from inverse RL\cite{ng2000algorithms,abbeel2004apprenticeship,MaxEntIRL} and achieves good performance in IL. AIL methods train a discriminator to distinguish agent trajectories and expert demonstrations instead of explicitly deriving a pseudo reward function. AIL policy $\pi$ utilizes signal from discriminator $D$ to minimize the divergence between the expert and agent occupancy measures \cite{AILmatch}. One of classical approaches is GAIL \cite{ho2016generative}, which uses Jensen-Shannon divergence as occupancy measurement. The objective of GAIL is as follows:
\begin{equation}
\label{e-gail}
\begin{split}
    \max_{\pi}\min_{D_{\psi}} \;&\mathbb{E}_{(s, a)\sim \rho^E_{\mathcal{M}}}\Big[-\log D_{\psi}(s, a) \Big]\\
   & + \mathbb{E}_{(s, a)\sim \rho^{\pi}_{\mathcal{M}}}\Big[-\log(1-D_{\psi}(s, a))\Big]
\end{split}
\end{equation}

\section{Domain alignment for third-person Imitation Learning}
\label{s-limitation}
In the general third-person IL setting, the expert and the agent have different perspectives, and the corresponding observation spaces $\mathcal{O}^e$ and $\mathcal{O}^a$. The domain alignment framework maps two observation spaces to the same latent space $\mathcal{Z}$ and trains the agent on the consistent representations. We summarize some techniques commonly used in domain alignment methods and analyze their limitations:

\subsection{Limitation of domain alignment methods}

\textbf{Domain confusion.}\,\, \cite{tzeng2014deep,ganin2015unsupervised,stadie2017third} proposed to learn representations that should be helpful to downstream tasks, but uninformative about the domain of the input \cite{ben2006analysis}. Taking TPIL \cite{stadie2017third} as an example, which formulates the problem as:
\begin{align*}
    \max_{\pi}\min_{\mathcal{D}_R} \mathcal{L}_R = &\sum_i CE(\mathcal{D}_R(\mathcal{D}_F(o_i)),c_{l_i}) \\
    & \text{s.t.}\,\, \text{MI}(\mathcal{D}_F(o);d_l)=0
\end{align*}
where MI is mutual information and $\mathcal{D}_R$, $\mathcal{D}_F$, $c_{l_i}$ and $d_l$ denote the classifier, the feature extractor, the correct class label and the domain label respectively. TPIL attempts to minimize classification loss while maximizing domain confusion using a gradient reversal layer from \cite{ganin2015unsupervised}.

\textbf{Introducing prior data.}\,\, Using extra collected prior datasets to assist training is a common practice in domain adaptation methods. DisentanGAIL \cite{cetin2021domain} collects prior datasets $B_{P.E}$ and $B_{P.\pi}$ by executing random behavior in both expert and agent domains, which are used to constrain the mutual information term:
\begin{align*}
    MI(z_i,d_i|B_{P.E}\cup B_{P.\pi})=0
\end{align*}
where $z_i$ is the state embedding and $d_i=1_{o_i\in B_E\cup B_{P.E}}$ is the binary domain label, TPIL also has the $B_{P.E}$ design.

The information contained in the state embedding can be divided into the \textit{domain information} and the \textit{goal-completion information} \cite{cetin2021domain}. The process of reducing the distance between two distributions in the task-relevant latent space is essentially the process of removing domain information. Removing domain information via maximizing the domain confusion assists the two distributions in becoming indistinguishable. However, these methods have inherent drawbacks. When the gap between $\mathcal{O}^a$ and $\mathcal{O}^e$ is sufficiently large, or the domain contains very little information (e.g., a large distance), it becomes difficult to completely remove domain information from the embedding, which can lead to performance degradation of the algorithm. We attempt to measure the gap between two observation spaces and the amount of information contained in domain by mutual information in \cref{domain-limitation}.

Prior datasets can accelerate the training of domain adaptation methods, but they introduce extra costs. Moreover, agent-environment interaction is expensive in most real-world scenarios, making it impractical to collect a prior dataset in the expert domain. In contrast, SENSOR does not require the participation of prior datasets, thus avoiding the time and space overhead of collecting and storing prior datasets.


\begin{figure}[t]
    \centering
    \includegraphics[width=0.22\textwidth]{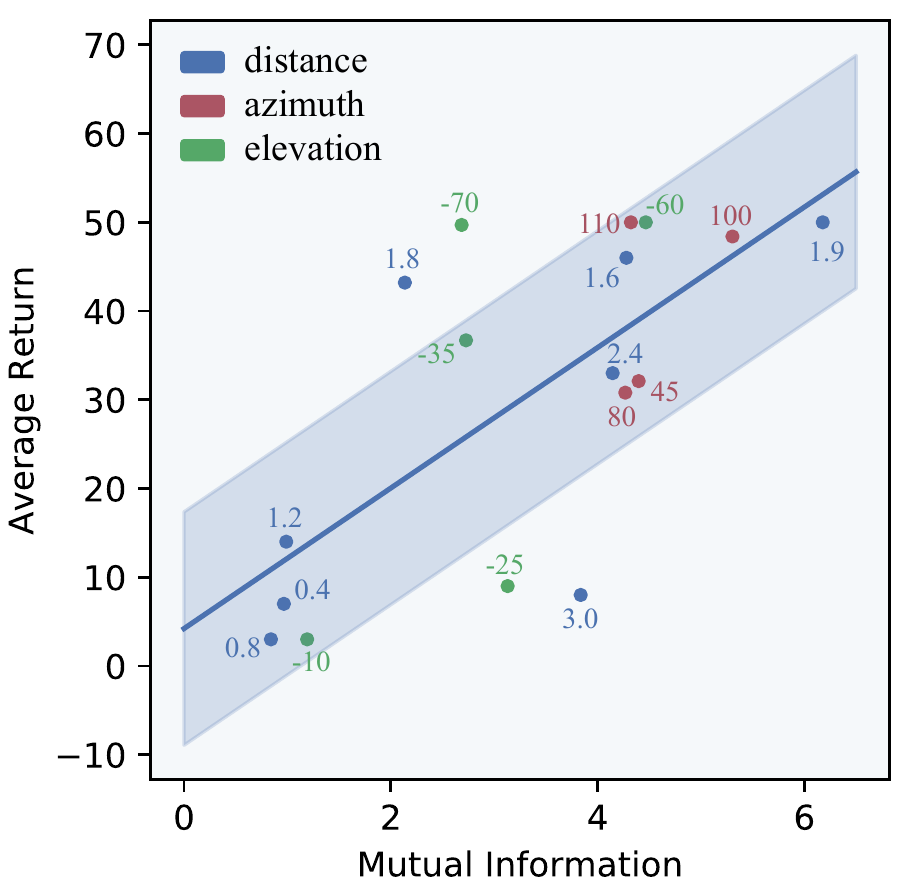}
    \includegraphics[width=0.24\textwidth]{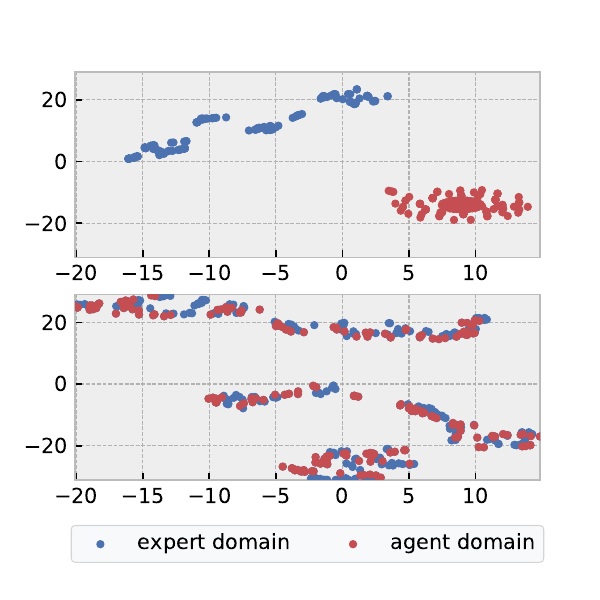}\\
    
    \makebox[0.22\textwidth]{\footnotesize (a)}
    \makebox[0.24\textwidth]{\footnotesize (b)}
    \caption{Results that explain the limitations of domain alignment methods.  \textbf{(a)}: The relationship between mutual information of different viewpoints and performance among different viewpoint settings learned by DisentanGAIL. Each point represents a single viewpoint which differs from the expert's on the label text around it. Details in \cref{domain-limitation}. \textbf{(b)}: The t-SNE plot\cite{van2008visualizing} shows the difference of embedding on two domains learned by two agents which are trained on different viewpoints. Top is a viewpoint far from the expert's and below is a closer one.} 
    \label{limitation}
\end{figure}

\subsection{Revealing the relationship between performance and perspective inconsistency}
\label{domain-limitation}

\textbf{Mutual Information between Observations.}\,\, We attempt to investigate the relationship between the performance of domain adaptation methods and MI through experiments with different initial viewpoints. For simplicity, we only consider one parameter changes shown in \cref{limitation}-(a) and keep the other parameters the same as the expert's. We conduct experiments on the InvertedPendulum environment in MuJoCo using DisentanGAIL for 10k steps. Evaluating the impact of perspective on MI should eliminate the impact of policy. To achieve this, we collect 10 test trajectories $\{\tau^a_i=o^a_{1:H}\}_{i=1}^{10}$ , extract agent states from $\tau^a_{1:10}$ and render new images to compose trajectories $\tau^e_{1:10}$ under the expert perspective. Then we estimate the MI using the MINE method \cite{belghazi2018mutual} on $\tau^a_{1:10}$ and $\tau^e_{1:10}$.

As shown in the correlation plot (\cref{limitation}-(a).), The magnitude of MI is positively correlated with viewpoint difference between agent and expert ($d=1.9$ has the largest MI, while $d=0.8$ has the smallest), but negatively correlated with episodic return. The hard cases (poor agent viewpoint initialization) significantly deviate from the expert perspective, resulting in low MI between the corresponding observation spaces $\mathcal{O}^e$ and $\mathcal{O}^a$, which hinders the learning process of DisentanGAIL.

\textbf{Representation alignment.} Domain alignment approaches aim to map the expert and agent domains into the same latent space, with the alignment degree directly determining the final performance. To further corroborate the aforementioned MI results, we employ DisentanGAIL to learn under two different agent perspectives (near expert perspective \texttt{d1}.\texttt{9a90e-45} and poor initialization \texttt{d0}.\texttt{4a90e-45}) and visualize the learned embeddings using a t-SNE plot. \cref{limitation}-(b) shows that when the initial viewpoint $c^a$ is closet to the expert viewpoint $c^e$, the two distributions are almost indistinguishable, indicating that $\mathcal{O}^a$ and $\mathcal{O}^e$ are mapped to the same hidden space by domain alignment. When the gap between perspectives is large, obtaining a consistent latent space is difficult, which vividly demonstrates why domain alignment methods cannot handle the hard cases.

\begin{figure*}[t] 
    \centering
    \includegraphics[width=\textwidth]{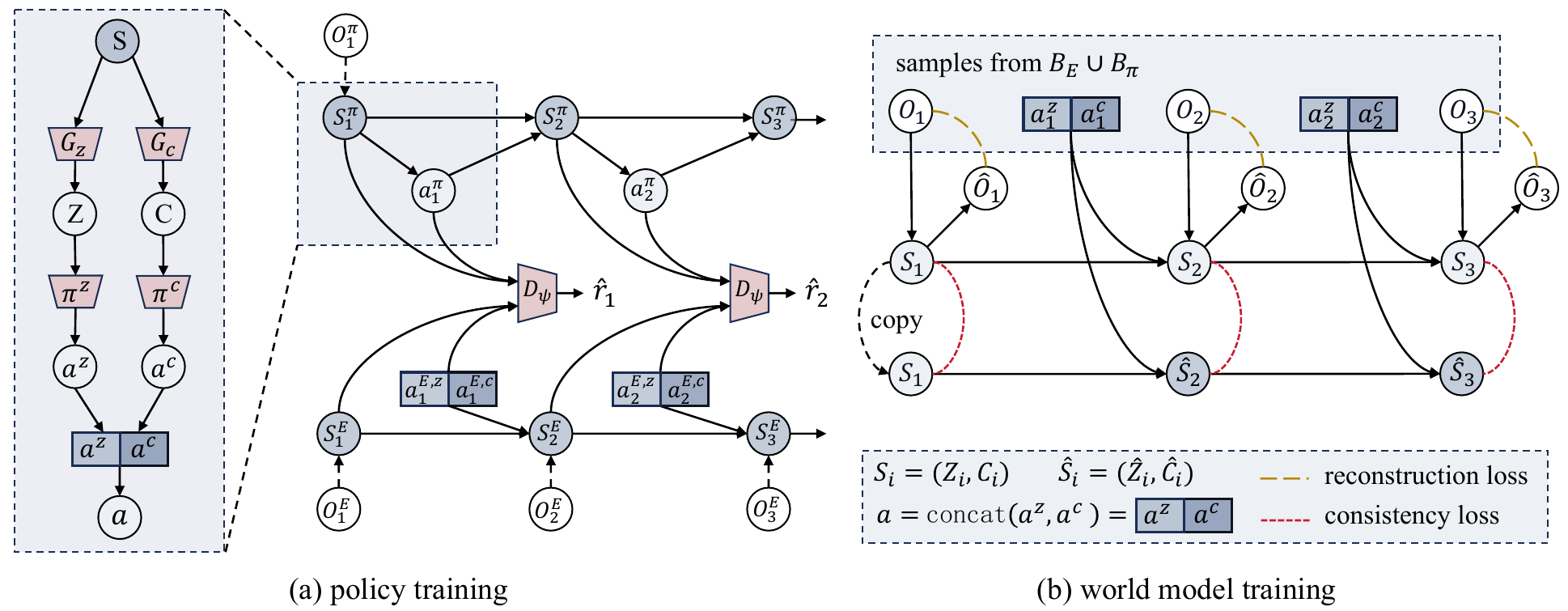}
    \caption{The main framework of the SENSOR method. \textbf{Left}: We use the Recurrent State Space Model(RSSM) \cite{planet} structure to capture the transitions of the latent state $s$, and design two encoders $G_z$ and $G_c$ to extract the motor state $z$ and the sensor state $c$ from $s$. We propose two policy networks $\pi^z$ and $\pi^c$ to make decisions based on $z$ and $c$ respectively, and discriminators $D_{\psi}$ to provide reward signals for the actor updating. Additionally, we feed the concatenation $a=\text{concat}(a^z,a^c)$ into the dynamics model $p_{\theta}$ and the state encoder $q_{\omega}$ to compute the prior and the posterior of state. \textbf{Right}: We apply $p_{\theta}$ and $q_{\omega}$ to compute the prior and the posterior of batch data sampled from $\mathcal{B}_E\cup\mathcal{B}_{\pi}$, and then we update the world model by minimizing the consistency loss $\mathcal{L}_c$ and the reconstruction loss $\mathcal{L}_r$ mentioned in \cref{s-components}.} 
    \label{f2} 
\end{figure*}

\section{Method}

In this section we demonstrate the SENSOR algorithm to tackle visual imitation learning task under the active vision framework. SENSOR proposes three designs, which will be presented next. In \cref{s-ensemble}, an ensemble method is used to stablize learning process of discriminator. In \cref{s-reward}, we integrate variance information into reward for better iteration with large observation space. Finally, in \cref{s-components}, we describe how SENSOR uses separate motor and sensor polices, rather than a combined single policy, and applies model-based visual imitation learning within an active vision framework.

\subsection{Discriminator Ensemble (DE)}
\label{s-ensemble}

Adopting viewpoint actions will make the observation space more varied compared to the original motor actions. Under the model-based AIL framework, the discriminator, which accepts the output of the encoder and provides the pseudo reward for the policy, is the key module of AIL, and is more likely to perform instability in the face of observation changes (see ablation study \cref{s-exp-ablation}). Motivated by ensemble methods and dropout technique\cite{dropout}, we design a set of $N_1$ discriminators $D_{\psi}=\{D_{\psi}^1,...,D_{\psi}^{N_1}\}$. When updating the discriminators, we select $N_2$ discriminators and drop the others. We use integrated output of selected ones $D_{\psi}^{\text{train}}=\{D_{\psi}^{i_1},...,D_{\psi}^{i_{N_2}}\}$ as the reward value:
\begin{align}
    r_t &= f_{\epsilon}(D_{\psi}^{i_1}(s_t,a_t),...,D_{\psi}^{i_{N_2}}(s_t,a_t)) \\
    &= f_{\epsilon}(r_t^{i_1},...,r_t^{i_{N_2}})
\end{align}

where $f_{\epsilon}$ denotes the $\epsilon$-reward function to be introduced subsequently. The drop operation avoids co-adaptation of different discriminators, thus enhancing the diversity for better ensemble performance. When training the actor, we use the integrated results of all $N_1$ discriminators $D_{\psi}(s_t,a_t)$ as the output.

\subsection{$\epsilon$-Reward}
\label{s-reward}
The alignment speed is an important evaluation metric in time-constrained scenarios, thus we require the agent to align its perspective as quickly as possible. In the beginning, exploration of perspectives should be encouraged to swiftly escape poor initial perspective and avoid getting trapped in local optima. As $c^a$ gradually approaches $c^e$, exploitation of the current perspective should be incentivized to prevent excessive fluctuations. In addition, we utilize the variance information of ensemble discriminators to automatically identify situations lacking exploration. We design the $\epsilon$-reward to adjust the trade-off between exploration and exploitation adaptively:
\begin{align}
    r_t = D_{\psi}^{\text{train}}(s_t,a_t) &= f_{\epsilon}(r_t^{i_1},...,r_t^{i_{N_2}}) \\
    &= \mu + \epsilon\sigma
\end{align}
where $\mu$ and $\sigma$ are mean and standard deviation of rewards $\{r_t^{i_1},...,r_t^{i_{D_2}}\}$ and $\epsilon$ is the trade-off weight. We set $\epsilon$ to linearly decrease from $\epsilon_0$ at the first epoch to $-\epsilon_0$ at the last epoch, where $\epsilon_0$ is a hyperparameter. We empirically show that $0.1$ is suitable for $\epsilon_0$ in \cref{s-exp-ablation}.

\subsection{Components of SENSOR}
\label{s-components}
\textbf{Model Learning.}\,\, We design the following dynamics model $\hat{s}_t\sim p_{\theta}(\hat{s}_t|\hat{s}_{t-1},a_{t-1})$ to compute the prior of state from $\hat{s}_{t-1}$ and $a_{t-1}$. After getting observation $o_t$, we can obtain a more accurate posterior estimate using state encoder $s_t\sim q_{\omega}(s_t|s_{t-1},a_{t-1},o_t)$. Additionally, We design the observation model $\hat{o}_t\sim p_{\zeta}(\hat{o}_t|s_t)$ to reconstruct observation based on $s_t$. We expect the world model to make the prior as close to the posterior as possible, and $s$ should contain enough information to reconstruct the entire image, then we obtain the following Evidence Lower Bound (ELBO):
\begin{equation}
    \label{e3}
    \mathcal{L} \doteq \mathbb{E}_p (\sum_t( \mathcal{L}^t_r + \mathcal{L}^t_c )) + \text{const} 
\end{equation}
where $\mathcal{L}^t_c \doteq \beta\text{KL}(p_{\theta}(\hat{s}_t|\hat{s}_{t-1},a_{t-1})\Vert q_{\omega}(s_t|s_{t-1},a_{t-1},o_t))$ is the consistency loss, and $\mathcal{L}^t_r \doteq -\ln q_{\zeta}(o_t|s_t)$ is the reconstruction loss. 

\textbf{Policy Learning.}\,\, We design a set of discriminators $D_{\psi}=\{D_{\psi}^1,...,D_{\psi}^{N_1}\}$ to distinguish whether the $(s,a)$ pair come from expert buffer $\mathcal{B}_E$, and use its outputs as reward signals while training the motor policy and the sensor policy. The specific usage of this set of discriminators is introduced in \cref{s-ensemble}, and we simply denote it as $r_t=D_{\psi}(s_t,a_t)$ for now. The optimization objective is the same as \cref{e-gail} ($\pi_e$ is the expert policy):

To separate the motor information and the sensor information from state $s_t$, we designed two encoders $G_z$ and $G_c$. We have $z_t=G_z(s_t)$ and $c_t=G_c(s_t)$, where $z_t$ is the motor state and $c_t$ is the sensor state. Additionally, we design two independent policies: the motor policy $\pi^z$ takes $z$ as input and outputs the motor action $a^z\sim\pi^z(a^z|z)$ to control the agent; the sensor policy $\pi^c$ takes $c$ as input and creates the sensor action $a^c\sim\pi^c(a^c|c)$ to adjust the parameters of the camera. We use the concatenation $a=\text{concat}(a^z,a^c)$ to model $f_a$ mentioned in \cref{s-preliminary}. 

We collect imaginary trajectories in the world model and use them to train the motor policy and the sensor policy. Collecting data in imagination rather than in interactions with the environment can improve sample efficiency and thus fuel the training process. Inspired by \cite{hafner2019dream}, we adopt the Actor-Critic method to train two policies. The actor is $a_t=\text{concat}(a_t^z,a_t^c)$, where $a_t^z\sim\pi^z(\cdot|G_z(s_t))$ and $a_t^c\sim\pi^c(\cdot|G_c(s_t))$. We compute critic target (return) $v_t^K$ by $v_t^K = v^K(s_t) = \sum^{t+K-1}_{\tau=t}\gamma^{\tau-t}\log r_{\tau} + \gamma^K v_{t+K}$, where $r_{\tau}=D_{\psi}(s_{\tau}, a_{\tau})$ is the reward, $v_{t+K}=V(s_{t+K})$ is the state value computed by critic $V$. We update the critic by minimizing the gap between $V(s_t)$ and $v_t^K$, and update the actor by maximizing the expected return. The detailed update process is shown in \cref{alg1}.

\begin{figure*}[t] 
    \centering
    \includegraphics[width=\textwidth]{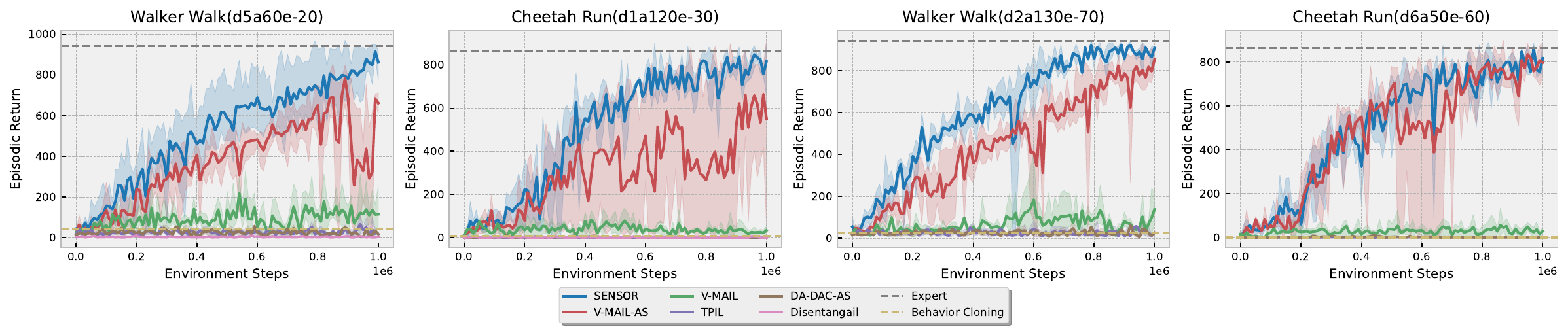}
    \caption{Evaluation results of SENSOR and other baseline methods over three seeds in two visual control tasks in DMC for $1$M steps with different initializations of agent's perspective. The specific settings of the environment and the initial viewpoint are shown above each figure. The solid lines represent the average episodic returns, and the shaded areas around them represent the variance of the performances on different seeds. The gray dotted line denotes the return of expert policy, while the yellow dotted line is the performance of Behavior Cloning. SENSOR outperforms other methods in terms of both performance and stability under different views in different environments.} 
    \label{exp-main}
\end{figure*}

\section{Experiments}
\label{exp}
We conduct several experiments to answer the following scientific questions: (1) How effective is SENSOR in third-person imitation learning tasks? (2) How is the contribution of different components of SENSOR to its final performance? (3) Is the performance of SENSOR consistent and robust to changes in perspective? (4) How does decoupled-dynamics method perform compared to SENSOR?

\textbf{Locomotion Tasks.}\,\, We test the ability of SENSOR and other baseline methods with two visual control tasks from Deep-Mind Control Suite \cite{tassa2018deepmind}: \textit{Walker Walk} and \textit{Cheetah Run}. These tasks are challenging due to high-dimension observation space and contact dynamics. The input observation images have the same shape $64\times 64\times 3$.

\begin{figure}[t] 
    \centering
    \includegraphics[width=0.48\textwidth]{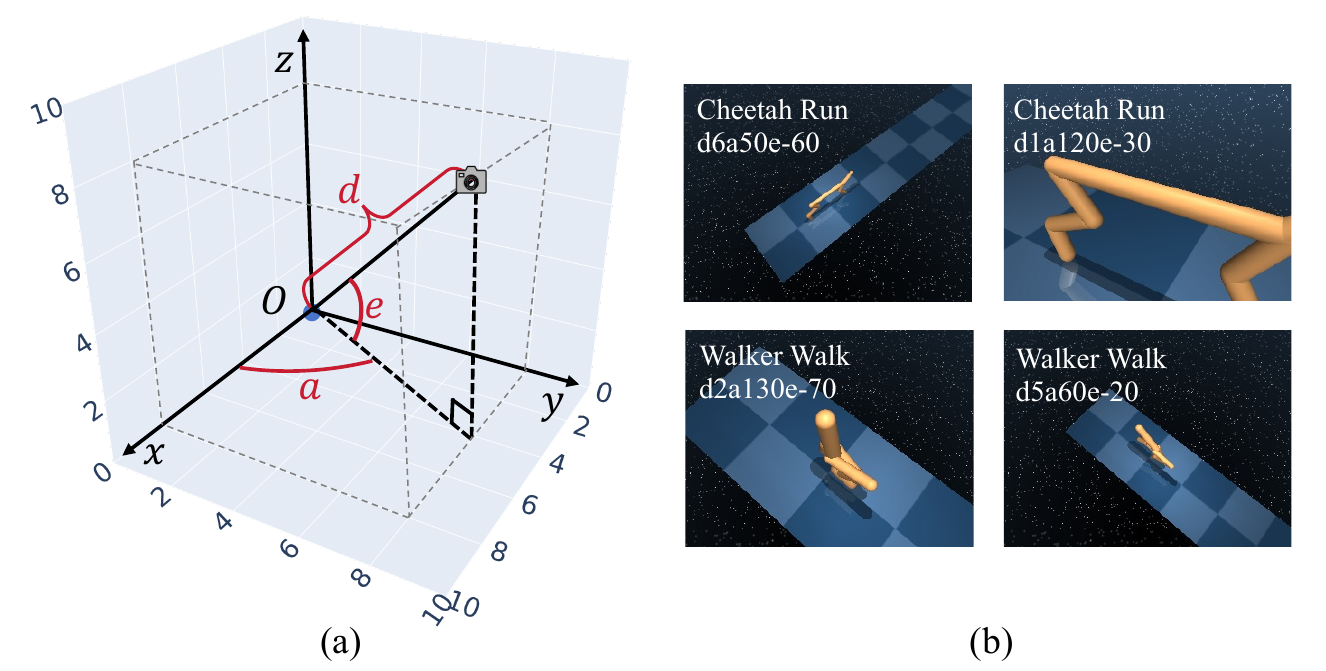}
    \caption{\textbf{(a)}: Camera parameters can be represented as a tuple $(d,a,e)$, where $d$ denotes distance from camera to target point $O$, $a$ is the horizontal angle relative to $O$ and $e$ is the vertical angle relative to $O$. Detailed perspective specifications in \cref{exp}. \textbf{(b)}: Selected hard viewpoints for latter experiments.} 
    \label{four-views}
\end{figure}

\textbf{Specification of perspective.} \,\, As shown in \cref{f2}, each sensor state $c$ has three elements $(d,a,e)$, where $d$ is the distance from the camera to a target point $O$ which can also be referred to as \textit{look at}, $a$ is azimuth angle (horizontal angle relative to $O$) and $e$ is elevation angle (vertical angle relative to $O$). These variables have ranges of $d\in[0,10]$, $a\in[0,180]$, and $e\in[-90,0]$. The expert demonstration has fixed sensor state $(3, 90, -45)$ which is unknown to agent. Sensor action $a^c$ has same shape as $c$, and we set $a^c$ as the additive correction to the sensor state. All three dimensions of $a^c$ have the same range $[-1, 1]$.

\textbf{Baselines.}\,\, Behavior Cloning (\textbf{BC}) takes a supervised learning way to learn a policy that directly maps states to actions using expert demonstrations. The remaining methods were divided into model-based (MB) and model-free (MF) categories. MB methods like \textbf{V-MAIL} \cite{rafailov2021visual} is a variational model-based adversarial imitation learning algorithm, while \textbf{V-MAIL-AS} (AS denotes Active Sensoring) incorporates V-MAIL with active vision by extending the action space $\textit{concat}(a^z,a^c)=a\sim\pi(a|s)$. MF methods like \textbf{DA-DAC-AS} is a variant of DA-DAC, which is used in \cite{wan2023semail}, by extending the action space similarly to V-MAIL-AS, while \textbf{TPIL} \cite{stadie2017third} and \textbf{DisentanGAIL} 
\cite{cetin2021domain} are introduced in \cref{s-limitation}.

\subsection{How effective is SENSOR in third-person imitation learning tasks?}
\label{s-exp-performance}
As shown in \cref{exp-main}, SENSOR surpasses all other baseline methods in both final performance and training stability, suggesting the validity of adjusting perspective actively. It is worth mentioning that besides observation, DisentanGAIL also requires true states as additional input. However, DisentanGAIL and TPIL fail in all experiments, indicating that domain adaptation cannot solve these hard cases with large viewpoint gaps, further verifying the conclusion we obtained in \cref{domain-limitation}. V-MAIL outstrips DA-DAC-AS, disentanGAIL and TPIL slightly, which demonstrates the superiority of MB methods. We believe that a well-trained world model (such as RSSM) naturally has the ability to combat distributional shifts. V-MAIL-AS builds on V-MAIL by giving the agent the ability to actively change its viewpoint. The difficulty of the task decreases as the viewpoint gets closer, and V-MAIL-AS's performance is significantly improved compared to V-MAIL's. DA-DAC-AS is an MF method that performs poorly even with active vision. By comparing it with V-MAIL-AS, we conclude that the world model is the main factor that enables active vision to work, and robust dynamics can facilitate the learning of sensor policy.

\begin{figure}[t] 
    \centering
    \includegraphics[width=0.48\textwidth]{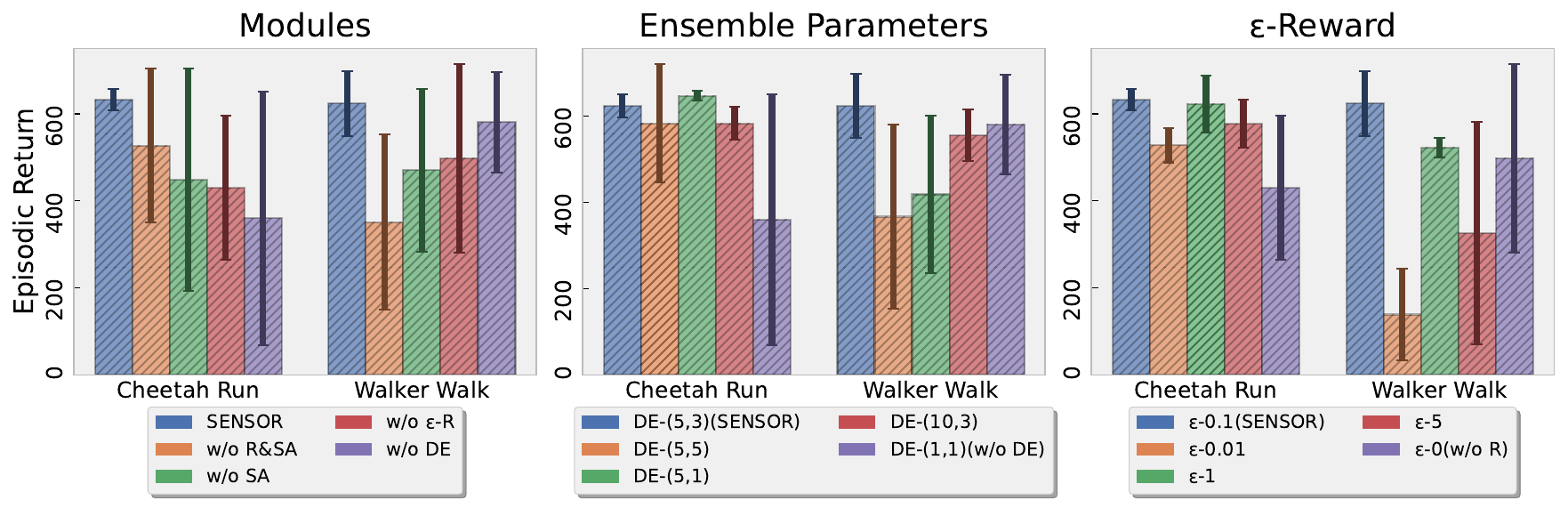}
    \caption{Ablation studies on DMC at step 500K. \textbf{Left}: Ablation on the design of separate actors and the design of $\epsilon$-reward. \textbf{Middle}: Results of different settings of discriminator ensemble parameters. \textbf{Right}: Results of different settings of $\epsilon$-reward.} 
    \label{f-ablation}
\end{figure}

\subsection{How is the contribution of different components of SENSOR to its final performance?}
\label{s-exp-ablation}
To evaluate the contribution of different components of SENSOR, we conduct ablation studies on the separate actors (SA), the design of the discriminator ensemble (DE), and the $\epsilon$-reward (R). First, we validate the effectiveness of SA and R-$0.1$ ($\epsilon_0=0.1$) via ablation experiments with the discriminator's parameters fixed at DE-$(5,3)$. Next, we will explore in detail the impact of different hyperparameter settings of the discriminator ensemble and $\epsilon$-reward on the experimental performance. We conduct these experiments on two environments with 500K steps and 3 random seeds for each experiment, and we set the agent's initial viewpoint as \texttt{d5a60e-20} for Walker Walk and \texttt{d1a120e-30} for Cheetah Run. Full ablation can be found in \cref{s-full-ablation}.

\textbf{Ablation on designed modules.}\,\, In terms of actor design, we compare the difference between the design of separate actors $a_t=\text{concat}(\pi^z(G_z(s_t)),\pi^c(G_c(s_t)))$ used in SENSOR and the single-actor design $a_t=\pi(s_t)$ used in V-MAIL. We compare the following two different types of reward design: the $\epsilon$-reward technique proposed in \cref{s-reward} and a naive design that simply averages the outputs of $N_2$ discriminators. As shown in \cref{f-ablation}-(1), removing SA results in a significant performance drop, indicating that designing two separate actors to control the agent and the camera can facilitate viewpoint alignment and improve the final performance. After eliminating $\epsilon$-reward, we observe an increase in variance and a decrease in performance, revealing that controlling the exploration-exploitation trade-off via $\epsilon$ allows for better viewpoint exploration in the early stage and more stable performance in the later stage. The performance of w/o R\&SA surpasses w/o SA and w/o R in Cheetah Run, indicating that both components, SA and R, are crucial and indispensable.

\begin{table}[t]
\renewcommand{\arraystretch}{1.25}
\vskip 0.15in
\begin{center}
\begin{small}
\begin{tabular}{c|c|c}
\toprule
\textbf{Initial Viewpoint}  &  SENSOR  &  V-MAIL  \\
\midrule
Expert  & 889.54 $\pm$ 52.68 &  \textbf{953.325 $\pm$ 10.45} \\ 
Easy  & \textbf{813.11 $\pm$ 72.10} &  111.59 $\pm$ 18.25 \\ 
Medium  &  \textbf{832.03 $\pm$ 69.89}  &  35.61 $\pm$ 12.75 \\
Hard  & \textbf{862.42 $\pm$ 64.70}  &  116.16 $\pm$ 95.82 \\
\bottomrule
\end{tabular}
\end{small}
\end{center}
\vskip -0.1in
\caption{The consistency performance of SENSOR and V-MAIL. We select one for each from four difficulty levels, which is Expert(\texttt{d3a90e-45}), Easy(\texttt{d3a80e-30}), Medium(\texttt{d4a80e-30}), and Hard(\texttt{d5a60e-20}). Each method is trained with three seeds, and we report the mean (higher is better) and standard deviation of the returns.}
\label{table1}
\end{table}

\textbf{Different settings of discriminator ensemble.}\,\, SENSOR designs $N_1$ discriminators and randomly select $N_2$ of them to update while updating discriminators. We investigate the effects of different settings of DE on experimental performance. As shown in \cref{f-ablation}-(2), we find that setting the value of $N_2$ to around half of $N_1$, such as DE-$(5,3)$, can lead to optimal performance. The smaller $N_2$, such as DE-$(5,1)$, may cause excessive randomness, and make discriminators updating insufficiently which will lead to underfitting. The larger $N_2$, such as DE-$(5,5)$, disables the drop operation and eliminates randomness, which can harm the robustness of the discriminators. Additionally, more discriminators do not necessarily lead to a better performance. Removing DE results in a considerable drop of performance and training stability, showing that the output of a single discriminator is unreliable under the active vision framework. The relative performance of experiments differs across the two environments, demonstrating that hyperparameters related to the reward have a significant impact on the performance thus require fine-tuning.

\begin{figure}[t] 
    \centering
    \includegraphics[width=0.48\textwidth]{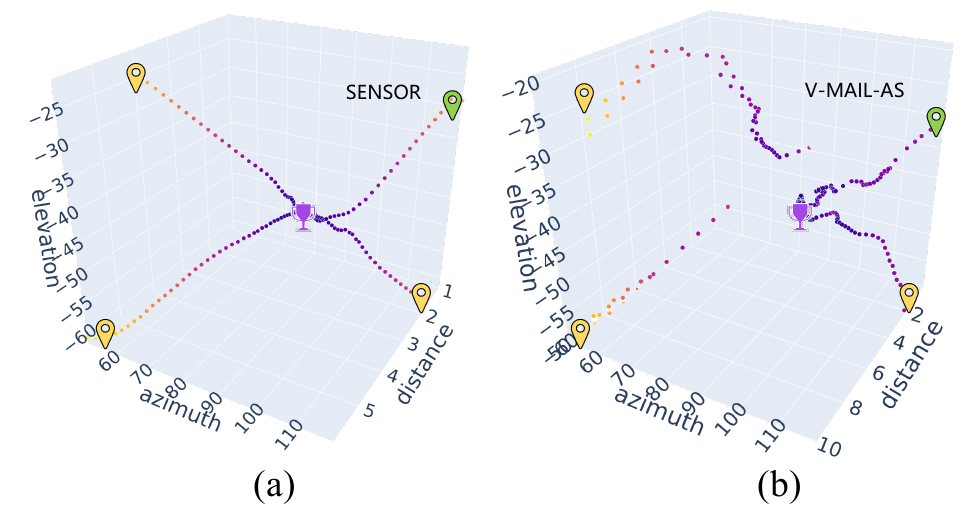}
    \caption{The 3D plot on locomotion of viewpoints with different initialization. \textbf{Left}: We train SENSOR in Cheetah Run for 1M steps with viewpoint initialization of one hard case(\texttt{d1a120e-30}) (marked with a green locator), and test on it along with three different hard initialization(\texttt{d6a50e-60}, \texttt{d5a60e-20}, \texttt{d2a120e-60}), corresponding to the four corners of the figure. The expert viewpoint is marked with a purple trophy. \textbf{Right}: The algorithm is switched to V-MAIL-AS, and all other settings are the same as on the left.}
    \label{f-robust}
\end{figure}

\textbf{Different settings of $\epsilon$-reward.}\,\, To evaluate the effects of different settings of the $\epsilon$-reward on final experimental performance, we conduct experiments on both Cheetah Run and Walker Walk. We find that a reasonable range for the value of $\epsilon_0$ is $[0.1, 1]$, which demonstrates the effectiveness of $\epsilon$-reward that encourages exploration in the early stages and focuses on exploitation in the later stages. Both excessively large or small value of $\epsilon_0$ are detrimental to model learning. A large $\epsilon_0$ leads to excessive exploration, preventing the perspective from stably converging. On the other hand, a small $\epsilon_0$ results in insufficient exploration, hindering the model from escaping local optima and finding the expert perspective. The performance variance across different $\epsilon_0$ settings is smaller on Cheetah Run than on Walker Walk. A possible explanation is the difference between two dynamics. The agent in Cheetah Run is less likely to fall compared to Walker Walk, leading to more stable performance.

\subsection{Is the performance of SENSOR consistent and robust to changes in perspective?}
\label{s-exp-consistency}

\textbf{Consistency.}\,\, To test the consistency over different initial viewpoints, we conduct several experiments on SENSOR and V-MAIL with varying difficulty measured according to the gap between the initial and expert viewpoints, and report the evaluation results in Table 1. SENSOR outstrips V-MAIL in all difficulty levels with significantly lower variance, revealing its strong consistency and adaptability across different initial perspectives. V-MAIL only performed well on the Expert level and exhibited high variance in performance across different perspectives. SENSOR and V-MAIL represent active vision and domain adaptation approaches, respectively. The experimental results above demonstrate that the active vision framework exhibits significant consistency and stability, making it more suitable for solving third-person imitation learning problems.

\textbf{Robustness.}\,\, To be applicable to a wider range of real-world scenarios, we need to ensure the robustness of our method to different viewpoints. We train SENSOR and V-MAIL-AS in Cheetah Run for 1M steps with viewpoint initialization as one hard perspective, and test it on four hard viewpoint initialization to test the learned policy. We show the locomotion of viewpoints of test trajectories in \cref{f-robust}. The policy learned on one viewpoint initialization by SENSOR can converge to the expert viewpoint uniformly and stably on all four viewpoints, demonstrating that SENSOR learns the correct pattern of adjusting the viewpoint, rather than simply imitating the expert demonstrations. In contrast, the viewpoint locomotion of V-MAIL-AS is unstable, and the agent cannot achieve the expert viewpoint under distant viewpoint initializations.

\begin{figure}[t] 
    \centering
    \includegraphics[width=0.48\textwidth]{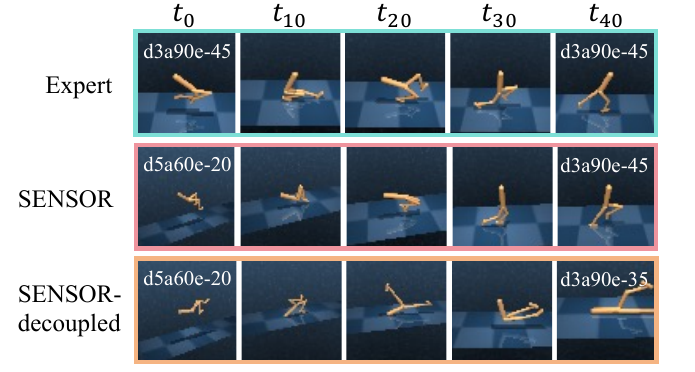}
    \caption{Examples of rendered images selected from complete trajectories. The three rows from top to bottom correspond to the expert policy, and the trained policy from SENOR and SENSOR-decoupled respectively. The images in each column correspond to the same timestep.}
    \label{f-sensor-vs-decoupled}
\end{figure}

\subsection{How does decoupled-dynamics method perform compared to SENSOR?}
\label{decouple-poor}
As shown in the aforementioned experiments, SENSOR exhibits consistent performance across different perspectives and can converge to the expert perspective quickly and stably, demonstrating that incorporating active vision in imitation learning is feasible. SENSOR only decouples motor and sensor at the policy level, but not at the world model level. Going further, \textit{can the motor and the sensor be completely decoupled?} To verify this idea, we design \textbf{SENSOR-decoupled}, a variant of SENSOR, which uses two parallel RSSMs to capture the dynamics of motor state $z$ and the sensor state $c$ independently and a joint observation model $\hat{o}\sim p_{\psi_2}(\hat{o}|z,c)$. 

To answer the problem above, we train both SENSOR and SENSOR-decoupled with initial viewpoint of \texttt{d5a60e-20}, and report several selected images from test trajectories in \cref{f-sensor-vs-decoupled}. SENSOR is well aligned to the expert perspective as well as the motor policy, demonstrating the effectiveness of active vision framework. The distance and azimuth of SENSOR-decoupled are aligned to the expert's, while there is a misalignment of elevation, which leads to the failure of motor policy imitating. The camera always stares at the agent's torso, thus the elevation will be affected when the agent falls, indicating that motor and sensor cannot be completely decoupled. To formally provide an explanation for the failure of SENSOR-decoupled, we derive an upper bound of SENSOR-decoupled's loss function in the \cref{poor} and then analyze in detail.

\section{Conclusion}
In many realistic Imitation Learning scenarios, there exist inconsistencies in the perspectives between the expert and the imitator, which will affect the effectiveness of Imitation Learning algorithms. Some existing third-person Imitation Learning methods based on domain alignment introduce extra cost and are difficult to deal with situations where the viewpoint gap is too large. To address these challenges, we introduce active sensoring in the visual RL setting and propose a model-based SENSory imitatOR (SENSOR), which jointly learns a world model to capture the dynamics of latent states, a sensor policy to control the camera, and a motor policy to control the agent. Experiments on DMC with different perspectives demonstrate that SENSOR can effectively adjust its perspective to match the expert's. Further experiments demonstrate the effectiveness of the various modules of our approach, as well as the consistency and robustness of the algorithm.

\textbf{Limitations and Future Work}.\,\, When the expert perspective remains constant, our experiments demonstrate that even in the hard cases, SENSOR achieves superior performance over baseline methods. However, in most real-world tasks, the expert perspective is likely to change (e.g., in maze navigation and autonomous driving). Future work can explore the extension of SENSOR to handle variable perspective imitation learning.

\section*{Societal Impact}

This paper presents work that aims to advance the field of Machine Learning. There are many potential societal consequences of our work, none of which we feel must be specifically highlighted here.


\bibliography{example_paper}

\begin{thebibliography}{49}
\providecommand{\natexlab}[1]{#1}
\providecommand{\url}[1]{\texttt{#1}}
\expandafter\ifx\csname urlstyle\endcsname\relax
  \providecommand{\doi}[1]{doi: #1}\else
  \providecommand{\doi}{doi: \begingroup \urlstyle{rm}\Url}\fi

\bibitem[Abbeel \& Ng(2004)Abbeel and Ng]{abbeel2004apprenticeship}
Abbeel, P. and Ng, A.~Y.
\newblock Apprenticeship learning via inverse reinforcement learning.
\newblock In \emph{Proceedings of the twenty-first international conference on
  Machine learning}, pp.\ ~1, 2004.

\bibitem[Bajcsy(1988)]{bajcsy1988active}
Bajcsy, R.
\newblock Active perception.
\newblock \emph{Proceedings of the IEEE}, 76\penalty0 (8):\penalty0 966--1005,
  1988.

\bibitem[Baram et~al.(2016)Baram, Anschel, and Mannor]{baram2016model}
Baram, N., Anschel, O., and Mannor, S.
\newblock Model-based adversarial imitation learning.
\newblock \emph{arXiv preprint arXiv:1612.02179}, 2016.

\bibitem[Belghazi et~al.(2018)Belghazi, Baratin, Rajeshwar, Ozair, Bengio,
  Courville, and Hjelm]{belghazi2018mutual}
Belghazi, M.~I., Baratin, A., Rajeshwar, S., Ozair, S., Bengio, Y., Courville,
  A., and Hjelm, D.
\newblock Mutual information neural estimation.
\newblock In \emph{International conference on machine learning}, pp.\
  531--540. PMLR, 2018.

\bibitem[Ben-David et~al.(2006)Ben-David, Blitzer, Crammer, and
  Pereira]{ben2006analysis}
Ben-David, S., Blitzer, J., Crammer, K., and Pereira, F.
\newblock Analysis of representations for domain adaptation.
\newblock \emph{Advances in neural information processing systems}, 19, 2006.

\bibitem[Cetin \& Celiktutan(2021)Cetin and Celiktutan]{cetin2021domain}
Cetin, E. and Celiktutan, O.
\newblock Domain-robust visual imitation learning with mutual information
  constraints.
\newblock \emph{arXiv preprint arXiv:2103.05079}, 2021.

\bibitem[Chen et~al.(2011)Chen, Li, and Kwok]{chen2011active}
Chen, S., Li, Y., and Kwok, N.~M.
\newblock Active vision in robotic systems: A survey of recent developments.
\newblock \emph{The International Journal of Robotics Research}, 30\penalty0
  (11):\penalty0 1343--1377, 2011.

\bibitem[Cheng et~al.(2018)Cheng, Agarwal, and
  Fragkiadaki]{cheng2018reinforcement}
Cheng, R., Agarwal, A., and Fragkiadaki, K.
\newblock Reinforcement learning of active vision for manipulating objects
  under occlusions.
\newblock In \emph{Conference on Robot Learning}, pp.\  422--431. PMLR, 2018.

\bibitem[Finn et~al.(2017)Finn, Yu, Zhang, Abbeel, and Levine]{finn2017one}
Finn, C., Yu, T., Zhang, T., Abbeel, P., and Levine, S.
\newblock One-shot visual imitation learning via meta-learning.
\newblock In \emph{Conference on robot learning}, pp.\  357--368. PMLR, 2017.

\bibitem[Franzmeyer et~al.(2022)Franzmeyer, Torr, and
  Henriques]{franzmeyer2022learn}
Franzmeyer, T., Torr, P., and Henriques, J.~F.
\newblock Learn what matters: cross-domain imitation learning with
  task-relevant embeddings.
\newblock \emph{Advances in Neural Information Processing Systems},
  35:\penalty0 26283--26294, 2022.

\bibitem[Fu et~al.(2017)Fu, Luo, and Levine]{AIL2}
Fu, J., Luo, K., and Levine, S.
\newblock Learning robust rewards with adversarial inverse reinforcement
  learning.
\newblock \emph{arXiv preprint arXiv:1710.11248}, 2017.

\bibitem[Fujita et~al.(2020)Fujita, Uenishi, Ummadisingu, Nagarajan, Masuda,
  and Castro]{fujita2020distributed}
Fujita, Y., Uenishi, K., Ummadisingu, A., Nagarajan, P., Masuda, S., and
  Castro, M.~Y.
\newblock Distributed reinforcement learning of targeted grasping with active
  vision for mobile manipulators.
\newblock In \emph{2020 IEEE/RSJ International Conference on Intelligent Robots
  and Systems (IROS)}, pp.\  9712--9719. IEEE, 2020.

\bibitem[Ganin \& Lempitsky(2015)Ganin and Lempitsky]{ganin2015unsupervised}
Ganin, Y. and Lempitsky, V.
\newblock Unsupervised domain adaptation by backpropagation.
\newblock In \emph{International conference on machine learning}, pp.\
  1180--1189. PMLR, 2015.

\bibitem[Garello et~al.(2022)Garello, Rea, Noceti, and
  Sciutti]{garello2022towards}
Garello, L., Rea, F., Noceti, N., and Sciutti, A.
\newblock Towards third-person visual imitation learning using generative
  adversarial networks.
\newblock In \emph{2022 IEEE International Conference on Development and
  Learning (ICDL)}, pp.\  121--126. IEEE, 2022.

\bibitem[Ghasemipour et~al.(2019)Ghasemipour, Zemel, and Gu]{AILmatch}
Ghasemipour, S. K.~S., Zemel, R.~S., and Gu, S.
\newblock A divergence minimization perspective on imitation learning methods.
\newblock In Kaelbling, L.~P., Kragic, D., and Sugiura, K. (eds.), \emph{3rd
  Annual Conference on Robot Learning, CoRL 2019, Osaka, Japan, October 30 -
  November 1, 2019, Proceedings}, volume 100 of \emph{Proceedings of Machine
  Learning Research}, pp.\  1259--1277. {PMLR}, 2019.

\bibitem[Goodfellow et~al.(2014)Goodfellow, Pouget-Abadie, Mirza, Xu,
  Warde-Farley, Ozair, Courville, and Bengio]{goodfellow2014generative}
Goodfellow, I., Pouget-Abadie, J., Mirza, M., Xu, B., Warde-Farley, D., Ozair,
  S., Courville, A., and Bengio, Y.
\newblock Generative adversarial nets.
\newblock \emph{Advances in neural information processing systems}, 27, 2014.

\bibitem[Grimes et~al.(2023)Grimes, Modayil, Mirowski, Rao, and
  Hadsell]{ActiveS2}
Grimes, M.~K., Modayil, J., Mirowski, P.~W., Rao, D., and Hadsell, R.
\newblock Learning to look by self-prediction.
\newblock \emph{Trans. Mach. Learn. Res.}, 2023, 2023.

\bibitem[Haarnoja et~al.(2018)Haarnoja, Zhou, Hartikainen, Tucker, Ha, Tan,
  Kumar, Zhu, Gupta, Abbeel, and Levine]{sac}
Haarnoja, T., Zhou, A., Hartikainen, K., Tucker, G., Ha, S., Tan, J., Kumar,
  V., Zhu, H., Gupta, A., Abbeel, P., and Levine, S.
\newblock Soft actor-critic algorithms and applications.
\newblock \emph{CoRR}, abs/1812.05905, 2018.

\bibitem[Hafner et~al.(2019{\natexlab{a}})Hafner, Lillicrap, Ba, and
  Norouzi]{hafner2019dream}
Hafner, D., Lillicrap, T., Ba, J., and Norouzi, M.
\newblock Dream to control: Learning behaviors by latent imagination.
\newblock \emph{arXiv preprint arXiv:1912.01603}, 2019{\natexlab{a}}.

\bibitem[Hafner et~al.(2019{\natexlab{b}})Hafner, Lillicrap, Fischer, Villegas,
  Ha, Lee, and Davidson]{planet}
Hafner, D., Lillicrap, T.~P., Fischer, I., Villegas, R., Ha, D., Lee, H., and
  Davidson, J.
\newblock Learning latent dynamics for planning from pixels.
\newblock In Chaudhuri, K. and Salakhutdinov, R. (eds.), \emph{Proceedings of
  the 36th International Conference on Machine Learning, {ICML} 2019, 9-15 June
  2019, Long Beach, California, {USA}}, volume~97 of \emph{Proceedings of
  Machine Learning Research}, pp.\  2555--2565. {PMLR}, 2019{\natexlab{b}}.

\bibitem[Hinton et~al.(2012)Hinton, Srivastava, Krizhevsky, Sutskever, and
  Salakhutdinov]{dropout}
Hinton, G.~E., Srivastava, N., Krizhevsky, A., Sutskever, I., and
  Salakhutdinov, R.
\newblock Improving neural networks by preventing co-adaptation of feature
  detectors.
\newblock \emph{CoRR}, abs/1207.0580, 2012.

\bibitem[Ho \& Ermon(2016)Ho and Ermon]{ho2016generative}
Ho, J. and Ermon, S.
\newblock Generative adversarial imitation learning.
\newblock \emph{Advances in neural information processing systems}, 29, 2016.

\bibitem[Jaderberg et~al.(2019)Jaderberg, Czarnecki, Dunning, Marris, Lever,
  Castaneda, Beattie, Rabinowitz, Morcos, Ruderman, et~al.]{jaderberg2019human}
Jaderberg, M., Czarnecki, W.~M., Dunning, I., Marris, L., Lever, G., Castaneda,
  A.~G., Beattie, C., Rabinowitz, N.~C., Morcos, A.~S., Ruderman, A., et~al.
\newblock Human-level performance in 3d multiplayer games with population-based
  reinforcement learning.
\newblock \emph{Science}, 364\penalty0 (6443):\penalty0 859--865, 2019.

\bibitem[Kim et~al.(2020)Kim, Gu, Song, Zhao, and Ermon]{kim2020domain}
Kim, K., Gu, Y., Song, J., Zhao, S., and Ermon, S.
\newblock Domain adaptive imitation learning.
\newblock In \emph{International Conference on Machine Learning}, pp.\
  5286--5295. PMLR, 2020.

\bibitem[Kim et~al.(2019)Kim, Gu, Song, Zhao, and Ermon]{kim2019cross}
Kim, K.~H., Gu, Y., Song, J., Zhao, S., and Ermon, S.
\newblock Cross domain imitation learning.
\newblock 2019.

\bibitem[Klein et~al.(2023)Klein, Weinberger, Singla, and
  Tschiatschek]{klein2023active}
Klein, T., Weinberger, S., Singla, A., and Tschiatschek, S.
\newblock Active third-person imitation learning.
\newblock \emph{arXiv preprint arXiv:2312.16365}, 2023.

\bibitem[Li et~al.(2017)Li, Song, and Ermon]{li2017infogail}
Li, Y., Song, J., and Ermon, S.
\newblock Infogail: Interpretable imitation learning from visual
  demonstrations.
\newblock \emph{Advances in neural information processing systems}, 30, 2017.

\bibitem[Lopes \& Santos-Victor(2005)Lopes and Santos-Victor]{lopes2005visual}
Lopes, M. and Santos-Victor, J.
\newblock Visual learning by imitation with motor representations.
\newblock \emph{IEEE Transactions on Systems, Man, and Cybernetics, Part B
  (Cybernetics)}, 35\penalty0 (3):\penalty0 438--449, 2005.

\bibitem[Ng et~al.(2000)Ng, Russell, et~al.]{ng2000algorithms}
Ng, A.~Y., Russell, S., et~al.
\newblock Algorithms for inverse reinforcement learning.
\newblock In \emph{Icml}, volume~1, pp.\ ~2, 2000.

\bibitem[Paszke et~al.(2019)Paszke, Gross, Massa, Lerer, Bradbury, Chanan,
  Killeen, Lin, Gimelshein, Antiga, et~al.]{paszke2019pytorch}
Paszke, A., Gross, S., Massa, F., Lerer, A., Bradbury, J., Chanan, G., Killeen,
  T., Lin, Z., Gimelshein, N., Antiga, L., et~al.
\newblock Pytorch: An imperative style, high-performance deep learning library.
\newblock \emph{Advances in neural information processing systems}, 32, 2019.

\bibitem[Rafailov et~al.(2021)Rafailov, Yu, Rajeswaran, and
  Finn]{rafailov2021visual}
Rafailov, R., Yu, T., Rajeswaran, A., and Finn, C.
\newblock Visual adversarial imitation learning using variational models.
\newblock \emph{Advances in Neural Information Processing Systems},
  34:\penalty0 3016--3028, 2021.

\bibitem[Raychaudhuri et~al.(2021)Raychaudhuri, Paul, Vanbaar, and
  Roy-Chowdhury]{raychaudhuri2021cross}
Raychaudhuri, D.~S., Paul, S., Vanbaar, J., and Roy-Chowdhury, A.~K.
\newblock Cross-domain imitation from observations.
\newblock In \emph{International Conference on Machine Learning}, pp.\
  8902--8912. PMLR, 2021.

\bibitem[Sallab et~al.(2017)Sallab, Abdou, Perot, and Yogamani]{sallab2017deep}
Sallab, A.~E., Abdou, M., Perot, E., and Yogamani, S.
\newblock Deep reinforcement learning framework for autonomous driving.
\newblock \emph{arXiv preprint arXiv:1704.02532}, 2017.

\bibitem[Schrittwieser et~al.(2020)Schrittwieser, Antonoglou, Hubert, Simonyan,
  Sifre, Schmitt, Guez, Lockhart, Hassabis, Graepel,
  et~al.]{schrittwieser2020mastering}
Schrittwieser, J., Antonoglou, I., Hubert, T., Simonyan, K., Sifre, L.,
  Schmitt, S., Guez, A., Lockhart, E., Hassabis, D., Graepel, T., et~al.
\newblock Mastering atari, go, chess and shogi by planning with a learned
  model.
\newblock \emph{Nature}, 588\penalty0 (7839):\penalty0 604--609, 2020.

\bibitem[Shang \& Ryoo(2021)Shang and Ryoo]{shang2021self}
Shang, J. and Ryoo, M.~S.
\newblock Self-supervised disentangled representation learning for third-person
  imitation learning.
\newblock In \emph{2021 IEEE/RSJ International Conference on Intelligent Robots
  and Systems (IROS)}, pp.\  214--221. IEEE, 2021.

\bibitem[Shang \& Ryoo(2023)Shang and Ryoo]{shang2023active}
Shang, J. and Ryoo, M.~S.
\newblock Active reinforcement learning under limited visual observability.
\newblock \emph{arXiv preprint arXiv:2306.00975}, 2023.

\bibitem[Sharma et~al.(2019)Sharma, Pathak, and Gupta]{sharma2019third}
Sharma, P., Pathak, D., and Gupta, A.
\newblock Third-person visual imitation learning via decoupled hierarchical
  controller.
\newblock \emph{Advances in Neural Information Processing Systems}, 32, 2019.

\bibitem[Stadie et~al.(2017)Stadie, Abbeel, and Sutskever]{stadie2017third}
Stadie, B.~C., Abbeel, P., and Sutskever, I.
\newblock Third-person imitation learning.
\newblock \emph{arXiv preprint arXiv:1703.01703}, 2017.

\bibitem[Sun \& Ma(2019)Sun and Ma]{sun2019adversarial}
Sun, M. and Ma, X.
\newblock Adversarial imitation learning from incomplete demonstrations.
\newblock \emph{arXiv preprint arXiv:1905.12310}, 2019.

\bibitem[Tamar et~al.(2018)Tamar, Rohanimanesh, Chow, Vigorito, Goodrich,
  Kahane, and Pridmore]{tamar2018imitation}
Tamar, A., Rohanimanesh, K., Chow, Y., Vigorito, C., Goodrich, B., Kahane, M.,
  and Pridmore, D.
\newblock Imitation learning from visual data with multiple intentions.
\newblock In \emph{International Conference on Learning Representations}, 2018.

\bibitem[Tassa et~al.(2018)Tassa, Doron, Muldal, Erez, Li, Casas, Budden,
  Abdolmaleki, Merel, Lefrancq, et~al.]{tassa2018deepmind}
Tassa, Y., Doron, Y., Muldal, A., Erez, T., Li, Y., Casas, D. d.~L., Budden,
  D., Abdolmaleki, A., Merel, J., Lefrancq, A., et~al.
\newblock Deepmind control suite.
\newblock \emph{arXiv preprint arXiv:1801.00690}, 2018.

\bibitem[Todorov et~al.(2012)Todorov, Erez, and Tassa]{todorov2012mujoco}
Todorov, E., Erez, T., and Tassa, Y.
\newblock Mujoco: A physics engine for model-based control.
\newblock In \emph{2012 IEEE/RSJ international conference on intelligent robots
  and systems}, pp.\  5026--5033. IEEE, 2012.

\bibitem[Tzeng et~al.(2014)Tzeng, Hoffman, Zhang, Saenko, and
  Darrell]{tzeng2014deep}
Tzeng, E., Hoffman, J., Zhang, N., Saenko, K., and Darrell, T.
\newblock Deep domain confusion: Maximizing for domain invariance.
\newblock \emph{arXiv preprint arXiv:1412.3474}, 2014.

\bibitem[Van~der Maaten \& Hinton(2008)Van~der Maaten and
  Hinton]{van2008visualizing}
Van~der Maaten, L. and Hinton, G.
\newblock Visualizing data using t-sne.
\newblock \emph{Journal of machine learning research}, 9\penalty0 (11), 2008.

\bibitem[Wan et~al.(2023)Wan, Wang, Shao, Chen, and Zhan]{wan2023semail}
Wan, S., Wang, Y., Shao, M., Chen, R., and Zhan, D.-C.
\newblock Semail: Eliminating distractors in visual imitation via separated
  models.
\newblock \emph{arXiv preprint arXiv:2306.10695}, 2023.

\bibitem[Xu et~al.(2017)Xu, Gao, Yu, and Darrell]{xu2017end}
Xu, H., Gao, Y., Yu, F., and Darrell, T.
\newblock End-to-end learning of driving models from large-scale video
  datasets.
\newblock In \emph{Proceedings of the IEEE conference on computer vision and
  pattern recognition}, pp.\  2174--2182, 2017.

\bibitem[Yu et~al.(2018)Yu, Finn, Xie, Dasari, Zhang, Abbeel, and
  Levine]{yu2018one}
Yu, T., Finn, C., Xie, A., Dasari, S., Zhang, T., Abbeel, P., and Levine, S.
\newblock One-shot imitation from observing humans via domain-adaptive
  meta-learning.
\newblock \emph{arXiv preprint arXiv:1802.01557}, 2018.

\bibitem[Ziebart et~al.(2008)Ziebart, Maas, Bagnell, Dey, et~al.]{MaxEntIRL}
Ziebart, B.~D., Maas, A.~L., Bagnell, J.~A., Dey, A.~K., et~al.
\newblock Maximum entropy inverse reinforcement learning.
\newblock In \emph{Aaai}, volume~8, pp.\  1433--1438. Chicago, IL, USA, 2008.

\bibitem[Zolna et~al.(2021)Zolna, Reed, Novikov, Colmenarejo, Budden, Cabi,
  Denil, de~Freitas, and Wang]{zolna2021task}
Zolna, K., Reed, S., Novikov, A., Colmenarejo, S.~G., Budden, D., Cabi, S.,
  Denil, M., de~Freitas, N., and Wang, Z.
\newblock Task-relevant adversarial imitation learning.
\newblock In \emph{Conference on Robot Learning}, pp.\  247--263. PMLR, 2021.

\end{thebibliography}
\bibliographystyle{icml2024}

\newpage
\appendix
\onecolumn
\section{Algorithm}
\begin{algorithm}
    \renewcommand{\algorithmicrequire}{\textbf{Input:}}
    \renewcommand{\algorithmicensure}{\textbf{Output:}}
    \caption{SENSOR:\,\,Imitate third-person expert's Behaviors via Active Sensoring}
    \label{alg1}
    \begin{algorithmic}[1]
    \REQUIRE Expert buffer $\mathcal{B}_E$, agent buffer $\mathcal{B}_{\pi}$
    \STATE Randomly initialize world model $\left\{q_{\omega},p_{\theta},p_{\zeta}\right\}$, motor and sensor policies $\pi=\left\{\pi^z,\pi^c\right\}$, encoders $\{G_z,G_c\}$, discriminator $D_{\psi}$ and critic $V$ 
    \FOR{epoch $n=1:N$}
    \FOR{timestep $t=1:T$}
    \STATE\textcolor{red}{// Collect training data}
    \STATE Infer latent state $s_t\sim q_{\omega}(s_t|s_{t-1},a_{t-1},o_t)$
    \STATE Get motor state and sensor state $z_t=G_z(s_t)$, $c_t=G_c(s_t)$
    \STATE Sample motor and sensor actions $a_t^z\sim \pi^z(a_t^z|z_t)$, $a_t^c\sim \pi^c(a_t^c|c_t)$
    \STATE Let agent takes $a^z_t$ and camera takes $a^c_t$, then get next observation $o_{t+1}=\text{env.step}(a^z_t,a^c_t)$
    \STATE $\mathcal{B}_{\pi} = \mathcal{B}_{\pi} \cup \{o_t, a_t=\text{concat}(a^z_t,a^c_t)\}$
    \ENDFOR
    \FOR{iteration $m=1:M$}
    \STATE\textcolor{red}{// Update world model}
    \STATE Sample batch $\{o_{1:H},a_{1:H-1}\}$ from $\mathcal{B}_E\cup\mathcal{B}_{\pi}$
    \STATE Rollout with $q_{\omega}$ and $p_{\theta}$ to get prior $B_{\text{prior}}=\{\hat{s}_{1:H}\}$ and posterior $B_{\text{post1}}=\{s_{1:H}\}$
    \STATE Use $B_{\text{post1}}$ to compute reconstruction loss $\mathcal{L}_r^t=-\ln p_{\zeta}(o_i|s_i)$
    \STATE Use $B_{\text{post1}}$ and $B_{\text{prior}}$ to compute consistency loss $\mathcal{L}_c^t$
    \STATE Update $\{q_{\omega},p_{\theta},p_{\zeta}\}$ using \cref{e3} with $B_{\text{prior}}$ and $B_{\text{post1}}$
    
    \STATE\textcolor{red}{// Update discriminator}
    \STATE Use $\pi$, $p_{\theta}$ and $\{G_z,G_c\}$ to get imaginary trajectory $B_{\text{imag}}=\{\hat{s}^{\pi}_{1:H},a^{\pi}_{1:H-1}\}$
    \STATE Sample batch $\{o^E_{1:H},a^E_{1:H-1}\}$ from $\mathcal{B}_E$ and use $q_{\omega}$ to compute posterior $B_{\text{post2}}=\{s^E_{1:H},a^E_{1:H-1}\}$
    \STATE Update $D_{\psi}$ using \cref{e-gail} with $B_{\text{imag}}$ and $B_{\text{post2}}$

    \STATE\textcolor{red}{// Update critic, policies and encoders}
    \STATE Get rewards $\{r_{1:H}\}$ using $D_{\psi}(\hat{s}^{\pi}_t,a^{\pi}_t)$ and get state-values $\{v_{1:H}\}$ using critic $V(\hat{s}^{\pi}_t)$ with $B_{\text{imag}}$
    \STATE Compute critic target $v^K_t = v^K(\hat{s}^{\pi}_t) = \sum^{t+K-1}_{\tau=t}\gamma^{\tau-t}\log r_{\tau} + \gamma^k v_{t+K}$
    \STATE Update $V$ by minimizing $J(V)=\sum_{t=1}^H(V(\hat{s}^{\pi}_t)-v^K_t)^2$
    \STATE Update policies and encoders by $\max_{(\pi^z,\pi^c,G_z,G_c)}\sum_{t=1}^H(v^K_t)$
    \ENDFOR
    \ENDFOR
\end{algorithmic}  
\end{algorithm}

\section{Proof}

\subsection{Explanation of SENSOR-decoupled's poor performance}
\label{poor}
SENSOR has shown that decoupling the policy into motor and sensor parts can improve convergence speed and final performance. Going further, we want to validate the idea of whether motor and sensor states can be completely decoupled. To test this idea, we design SENSOR-decoupled that uses two parallel RSSMs to capture the dynamics of motor state $z$ and the sensor state $c$ independently and designs a different observation model $\hat{o}\sim p_{\omega_2}(\hat{o}|z,c)$. Through experiments in \cref{decouple-poor}, we find that compared to SENSOR, SENSOR-decoupled is not stable enough with high variance. Then we attempt to derive an upper bound on the SENSOR-decoupled's loss, which can provide a reasonable explanation for its poor performance.
\begin{proposition}
\label{prop1}
 (Divergence in latent space) Given POMDP $\mathcal{M}$, history $h_t=(o_{\leq t},a_{<t})$ and latent representation of history $\hat{s}_t=q(h_t)$. Let $s_t\sim P(s_t|h_t)\approx P(s_t|\hat{s}_t)$, $a^z\sim \pi^z$ and $a^c\sim \pi^c$. $\mathbb{D}_f$ means f-divergence. Then
 \begin{align*}
    \mathbb{D}_f(\rho^{\pi}_{\mathcal{M}}(\hat{s},a) \Vert \rho^E_{\mathcal{M}}(\hat{s},a)) \leq \frac{1}{2} (&\mathbb{D}_f(\rho^{\pi}_{\mathcal{M}}(z,a^z) \Vert \rho^{E}_{\mathcal{M}}(z,a^z)) \\
    + &\mathbb{D}_f(\rho^{\pi}_{\mathcal{M}}(c,a^c) \Vert \rho^{E}_{\mathcal{M}}(c,a^c)))
\end{align*}
\end{proposition}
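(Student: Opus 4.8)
The plan is to reduce the statement to the \emph{joint convexity} of the $f$-divergence, which is precisely the property that produces both the factor $\tfrac12$ and the average of two divergences on the right-hand side. The engine is the elementary fact that, writing $\mathbb{D}_f(P\Vert Q)=\mathbb{E}_{Q}\!\left[f\!\left(\tfrac{dP}{dQ}\right)\right]$, the map $(P,Q)\mapsto\mathbb{D}_f(P\Vert Q)$ is jointly convex, so that for equal mixtures $\mathbb{D}_f\!\left(\tfrac12 P_1+\tfrac12 P_2\,\big\Vert\,\tfrac12 Q_1+\tfrac12 Q_2\right)\le \tfrac12\mathbb{D}_f(P_1\Vert Q_1)+\tfrac12\mathbb{D}_f(P_2\Vert Q_2)$. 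My target is to realize the two joint occupancy measures as one common $\tfrac12$--$\tfrac12$ mixture and then invoke this inequality.

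First I would fix the likelihood-ratio form of each divergence and record two auxiliary facts that I expect to reuse. (i) An $f$-divergence is invariant under marginalizing a coordinate on which the Radon--Nikodym derivative does not depend: if $\tfrac{d\rho^\pi}{d\rho^E}(\hat s,a)$ is a function of $(z,a^z)$ alone, then $\mathbb{E}_{\rho^E_{\mathcal M}(\hat s,a)}[f(\cdot)]=\mathbb{D}_f(\rho^\pi_{\mathcal M}(z,a^z)\Vert\rho^E_{\mathcal M}(z,a^z))$, and symmetrically for $(c,a^c)$. (ii) The data-processing inequality for the deterministic projections $(\hat s,a)\mapsto(z,a^z)$ and $(\hat s,a)\mapsto(c,a^c)$ induced by the encoders $G_z,G_c$ and the split $a=\mathrm{concat}(a^z,a^c)$, which controls the two marginal divergences by the joint one in the reverse direction and serves as a sanity check on tightness.

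The crucial modeling step is to exhibit both $\rho^\pi_{\mathcal M}(\hat s,a)$ and $\rho^E_{\mathcal M}(\hat s,a)$ as the \emph{same} $\tfrac12$--$\tfrac12$ mixture of a ``motor-determined'' law and a ``sensor-determined'' law on $\mathcal Z\times\mathcal A^z\times\mathcal C\times\mathcal A^c$. This is where the decoupling hypothesis of SENSOR-decoupled does the real work: because the two RSSMs evolve $z$ and $c$ under disjoint dynamics and the policies $\pi^z,\pi^c$ act on disjoint coordinates, the only coupling between the two channels is through the shared code $\hat s=q(h_t)$ together with the approximation $P(s_t\mid h_t)\approx P(s_t\mid\hat s_t)$. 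I would use this to write the joint law so that one mixture component carries exactly the motor discrepancy and the other exactly the sensor discrepancy, then apply joint convexity and finally identify the two resulting terms with $\mathbb{D}_f(\rho^\pi_{\mathcal M}(z,a^z)\Vert\rho^E_{\mathcal M}(z,a^z))$ and $\mathbb{D}_f(\rho^\pi_{\mathcal M}(c,a^c)\Vert\rho^E_{\mathcal M}(c,a^c))$ via fact (i); equivalently, one can apply Jensen's inequality directly to $f$ evaluated at the \emph{averaged} likelihood ratio $\tfrac12(u+v)$, where $u,v$ are the two marginal ratios, which produces the $\tfrac12$ in one line.

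The hard part will be precisely this mixture (or averaged-ratio) representation. A naive reading of ``decoupled'' as full independence makes $\rho_{\mathcal M}(\hat s,a)$ a \emph{product} of the two marginals, and for a product the divergence tensorizes additively (for KL it is exactly the sum), which is strictly larger than the claimed bound; so the inequality cannot come from treating the joint as a product. It must instead exploit that $\hat s$ is a single shared representation from which $z$ and $c$ are read off, so that the joint discrepancy is governed by the \emph{average} rather than the sum of the two channel discrepancies. Making that ``average, not product'' structure rigorous, and pinning down exactly where the approximation $P(s_t\mid h_t)\approx P(s_t\mid\hat s_t)$ is consumed, is the step I would spend the most care on; once it is in place, convexity of $f$ closes the argument immediately and supplies the factor $\tfrac12$.
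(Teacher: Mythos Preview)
Your proposal has a genuine gap, and the route you sketch is not the one the paper takes.

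\textbf{The gap.} You correctly flag that a product structure on $(\hat s,a)=(z,a^z,c,a^c)$ would make the joint divergence \emph{additive} in the two marginals, which is strictly worse than the claimed bound. Your fix is to instead realize each joint occupancy measure as a $\tfrac12$--$\tfrac12$ mixture and then invoke joint convexity. But you never produce such a mixture, and there is no natural one: $\rho_{\mathcal M}(\hat s,a)$ is a single distribution on a product space, not a convex combination of a ``motor-only'' law and a ``sensor-only'' law with matching weights for both $\pi$ and $E$. The ``averaged likelihood ratio'' variant has the same problem: you would need $\frac{d\rho^\pi}{d\rho^E}(\hat s,a)=\tfrac12(u+v)$ with $u$ depending only on $(z,a^z)$ and $v$ only on $(c,a^c)$, and nothing in the decoupled-dynamics hypothesis delivers that. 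So the step you yourself identify as ``the hard part'' is not merely hard; as stated it does not go through.

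\textbf{What the paper actually does.} The paper's argument uses two structural facts about the \emph{expert} that you did not invoke. First, the expert's viewpoint is fixed and its sensor actions are identically zero, so $\pi^c_E(a^c\mid c)\equiv 1$. Second, for the agent $\pi^c(a^c\mid c)\le 1$. Factoring the joint ratio as
\[
\frac{\rho^\pi_{\mathcal M}(\hat s,a)}{\rho^E_{\mathcal M}(\hat s,a)}
=\frac{\rho^\pi_{\mathcal M}(z,a^z)}{\rho^E_{\mathcal M}(z,a^z)}\cdot\frac{p(c\mid z)}{p(c\mid z)}\cdot\frac{\pi^c(a^c\mid c)}{\pi^c_E(a^c\mid c)}
=\frac{\rho^\pi_{\mathcal M}(z,a^z)}{\rho^E_{\mathcal M}(z,a^z)}\cdot \pi^c(a^c\mid c),
\]
the paper bounds the argument of $f$ by the motor-only ratio and obtains $\mathbb{E}_{\rho^E}[f(\rho^\pi(z,a^z)/\rho^E(z,a^z))]$. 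By the symmetry of $z$ and $c$ in the decoupled setting, exactly the same derivation yields the sensor-only expression, so the two upper bounds are \emph{equal}; the factor $\tfrac12$ then comes trivially from averaging two equal quantities, not from convexity of $f$. Finally each expectation is reduced to the corresponding marginal $f$-divergence. Note in particular that the $\tfrac12$ here carries no information-theoretic content, and the paper itself remarks that the resulting bound is loose because of the $\pi^c\le 1$ step; your convexity intuition, while natural, is aimed at a mechanism the proof does not use.
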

\begin{proof}[Proof of \cref{prop1}]
Imitation learning can be transformed into divergence minimization in the latent belief state representation \cite{rafailov2021visual}, and we get result 
\begin{equation*}
    \mathbb{D}_f(\rho^{\pi}_{\mathcal{M}}(o,a) \Vert \rho^E_{\mathcal{M}}(o,a)) \leq \mathbb{D}_f(\rho^{\pi}_{\mathcal{M}}(s,a) \Vert \rho^E_{\mathcal{M}}(s,a)) \leq \mathbb{D}_f(\rho^{\pi}_{\mathcal{M}}(\hat{s},a) \Vert \rho^E_{\mathcal{M}}(\hat{s},a))
\end{equation*}
Based on the definition of f-divergence $\mathbb{D}_f(p\Vert q)=\int q(x)f\left( \frac{p(x)}{q(x)} \right)\text{d}x$ and expectation, we can get
\begin{equation*}
    \mathbb{D}_f(\rho^{\pi}_{\mathcal{M}}(\hat{s},a) \Vert \rho^{E}_{\mathcal{M}}(\hat{s},a)) = \underbrace{\mathbb{E}_{\hat{s},a\sim \rho^E_{\mathcal{M}}(\hat{s},a)} \left[f\left( \frac{\rho^{\pi}_{\mathcal{M}}(\hat{s},a)}{\rho^{E}_{\mathcal{M}}(\hat{s},a)} \right)\right]}_{(a)}
\end{equation*}
In SENSOR-decouple's setting, state $\hat{s}$ and action $a$ can be divided into the motor part and the sensor part. Together with Bayes' theorem, we have
\begin{align}
    \label{e8}
    (a) &= \mathbb{E}_{z,c,a^z,a^c \sim \rho^E_{\mathcal{M}}(\hat{s},a)} \left[f\left( \frac{\rho^{\pi}_{\mathcal{M}}(z,a^z)}{\rho^{E}_{\mathcal{M}}(z,a^z)} \cdot \frac{p(c,a^c|z,a^z)}{p(c,a^c|z,a^z)} \right)\right] 
    \\
    \label{e9}
    &= \mathbb{E}_{z,c,a^z,a^c \sim \rho^E_{\mathcal{M}}(\hat{s},a)} \left[f\left( \frac{\rho^{\pi}_{\mathcal{M}}(z,a^z)}{\rho^{E}_{\mathcal{M}}(z,a^z)} \cdot \frac{p(c|z)}{p(c|z)} \cdot \frac{\pi^c(a^c|c)}{\pi^c_E(a^c|c)} \right)\right]
    \\
    \label{e10}
    &= \mathbb{E}_{z,c,a^z,a^c \sim \rho^E_{\mathcal{M}}(\hat{s},a)} \left[f\left( \frac{\rho^{\pi}_{\mathcal{M}}(z,a^z)}{\rho^{E}_{\mathcal{M}}(z,a^z)} \cdot \pi^c(a^c|c) \right)\right]
    \\
    \label{e11}
    &\leq \mathbb{E}_{z,c,a^z,a^c \sim \rho^E_{\mathcal{M}}(\hat{s},a)} \left[f\left( \frac{\rho^{\pi}_{\mathcal{M}}(z,a^z)}{\rho^{E}_{\mathcal{M}}(z,a^z)} \right)\right] 
    \\
    \label{e12}
    &= \mathbb{E}_{z,c,a^z,a^c \sim \rho^E_{\mathcal{M}}(\hat{s},a)} \left[f\left( \frac{\rho^{\pi}_{\mathcal{M}}(c,a^c)}{\rho^{E}_{\mathcal{M}}(c,a^c)} \right)\right] 
    \\
    \label{e13}
    &= \frac{1}{2} \mathbb{E}_{z,c,a^z,a^c \sim \rho^E_{\mathcal{M}}(\hat{s},a)} \left[f\left( \frac{\rho^{\pi}_{\mathcal{M}}(z,a^z)}{\rho^{E}_{\mathcal{M}}(z,a^z)} \right) + f\left( \frac{\rho^{\pi}_{\mathcal{M}}(c,a^c)}{\rho^{E}_{\mathcal{M}}(c,a^c)} \right)\right]
\end{align}

The samples $(c,a^c)$ are drawn from the $\mathcal{B}_E$. The expert's viewpoint $c$ is fixed and its sensor actions $a^c$ are always zero, thus $\pi_E^c(a^c|c)\equiv 1$ and \cref{e10} holds. \cref{e11} comes from an obvious assumption that $\pi^c(a^c|c)\leq 1$. In SENSOR-decoupled's setting, motor state $z$ and sensor state $c$ are somewhat equivalent and can be interchanged, so \cref{e11} and \cref{e12} are completely equivalent and can be directly derived into \cref{e13}. Then \cref{e13} can be reduced to f-divergence
\begin{align}
    \text{\cref{e13}} &\leq \frac{1}{2}\mathbb{E}_{z,a^z \sim \rho^E_{\mathcal{M}}(z,a^z)} \left[f\left( \frac{\rho^{\pi}_{\mathcal{M}}(z,a^z)}{\rho^{E}_{\mathcal{M}}(z,a^z)} \right)\right] + \frac{1}{2}\mathbb{E}_{c,a^c \sim \rho^E_{\mathcal{M}}(c,a^c)} \left[f\left( \frac{\rho^{\pi}_{\mathcal{M}}(c,a^c)}{\rho^{E}_{\mathcal{M}}(c,a^c)} \right)\right] \\
    &= \frac{1}{2} (\mathbb{D}_f(\rho^{\pi}_{\mathcal{M}}(z,a^z) \Vert \rho^{E}_{\mathcal{M}}(z,a^z)) + \mathbb{D}_f(\rho^{\pi}_{\mathcal{M}}(c,a^c) \Vert \rho^{E}_{\mathcal{M}}(c,a^c)))
\end{align}
\end{proof}
\cref{prop1} shows that the gap between the agent and the expert occupancy measures will decrease with continuous optimization of the motor policy gap (see \cref{e11}) and sensor policy gap (see \cref{e12}). In principle, if both branches can be optimized well, the final performance will be good. However, this goal is difficult to achieve for the following reasons:
\begin{itemize}
    \item Compared with other world model candidates like RNNs and SSMs, RSSM is more complex and requires careful fine-tuning to achieve stable performance. SENSOR-decoupled designs two RSSMs, so it's hard to optimize naturally.
    \item In the proof of \cref{prop1}, We assume that $\pi^c(a^c|c)\leq 1$ which is quite obvious, leading to the final loss upper bound not being tight enough. It may not be possible to achieve good results based on this loose upper bound.
    \item In some environments of DMC, the camera is set to always focus on a certain part of the agent (default lookat is torso). Take Walker as an example, the camera's relative height to the floor will change while the agent is moving, thereby affecting the sensor's judgment of elevation in observation space. In this case, the motor state $z$ and the sensor state $c$ cannot be decoupled completely.
\end{itemize}

\section{Additional results}
\label{s-additional-results}

\begin{figure*}[t] 
    \centering
    \includegraphics[width=\textwidth]{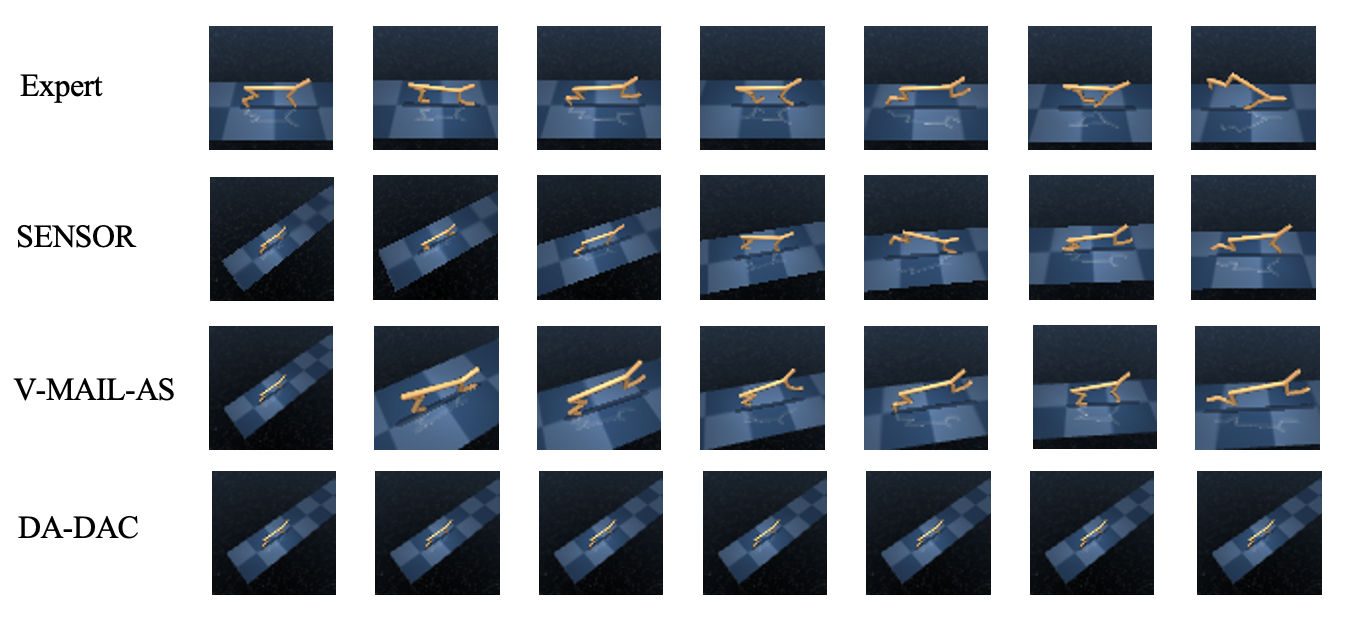}
    \caption{Visualization of observations collected by different trained agents.} 
    \label{multi-trajs}
\end{figure*}

\begin{figure*}[t] 
    \centering
    \includegraphics[width=\textwidth]{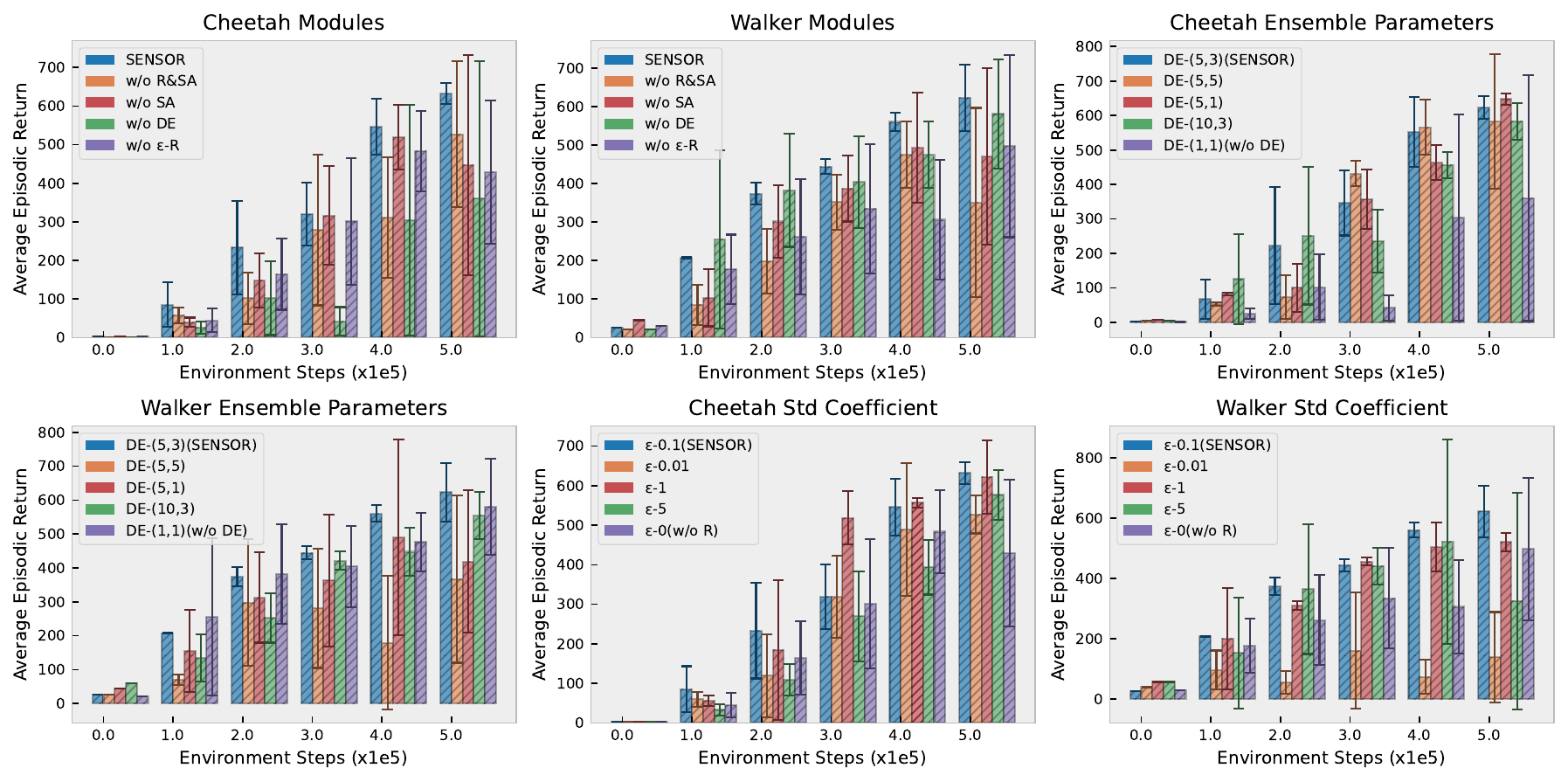}
    \caption{Ablation Results of Extra Environment Steps} 
    \label{full-ablation}
\end{figure*}

\subsection{Visualization of observational trajectories}
As shown in \cref{multi-trajs}, the first row is the expert trajectory and the second, third and fourth rows are the test trajectories generated using the strategies obtained from the training of Algorithms SENSOR, V-MAIL-AS, DA-DAC, respectively. It can be observed that SENSOR and V-MAIL-AS are more successful in mimicking the expert's perspective and show better performance in the trajectory. DA-DAC does not mimic both the expert perspective and the motor strategy well enough.

\subsection{More Ablation Results}
\label{s-full-ablation}

In \cref{full-ablation}, we show the full ablation results of full training process. SENSOR is able to perform competitively enough among different time steps. We find that ablation does not perform consistently on different environments, but SENSOR is able to do a good job on different environments. Ensemble parameters performs good when $N_2 / N_1$ is near 0.5, which shares similar results as in dropout\cite{dropout}. Cheetah is more robust to parameter changes in std, since the cheetah stand at start point while the Walker is easier to fall down, making it sensitive to changes in std coefficient which controls exploration.

\section{Implementation Details}
\subsection{Detailed Description about Baselines}
\textbf{Behavior Cloning.}\,\, Behavior cloning(BC) is a basic method in imitation learning where an agent learns to mimic the behavior of an expert by observing and imitating its actions in a given environment. It uses a supervised way to directly map observation into actions. We implement BC using the same convolution network as in SENSOR, and come after 3 linear layers using tanh as activation function.

\textbf{V-MAIL(-AS).}\,\, Visual Adversarial Imitation Learning using Variational Models(V-MAIL) is a method that addresses the challenges of learning behaviors through deep reinforcement learning by using visual demonstrations of desired behaviors. The algorithm learns successful visuomotor policies in a sample-efficient manner, achieves higher asymptotic performance, and can learn new tasks from visual demonstrations without additional environment reward signals. However, the method has limitations such as vulnerability to adversarial visual perturbations and potential difficulty in representing complex scenes with multiple cluttered or deformable objects. V-MAIL trains a variational latent-space dynamics model and a discriminator that provides a learning reward signal by distinguishing latent rollouts of the agent from the expert. We implement a PyTorch \cite{paszke2019pytorch} version of V-MAIL following the official implementation. V-MAIL-AS extends the action space of policy and keeps others the same.

\textbf{DA-DAC-AS.}\,\, DA-DAC is a data-augmentation version of Discriminator Actor-Critic(DAC). The DAC algorithm is an approach that addresses the issues of reward bias and sample inefficiency in Adversarial Imitation Learning. It uses off-policy Reinforcement Learning to reduce the number of interactions with the environment and has an unbiased reward function, making it applicable to a wide range of problems. We use an unofficial version and modifies the environment and the related dimensions, and implement DA-DAC-AS based on it.

\textbf{TPIL.}\,\, TPIL is a method for training agents in reinforcement learning to achieve goals in complex environments by observing third-person demonstrations from a different viewpoint.
Unlike traditional first-person imitation learning, TPIL does not require the agent to be provided with first-person demonstrations, which can be challenging to collect.
The TPIL method utilizes recent advances in domain confusion to generate domain agnostic features during the training process.TPIL is built on AIL framework, and utilize another classifier to distinguish between expert domain and agent domain. We implement TPIL based on an imitation repo.

\textbf{DisentanGAIL.}\,\, DisentanGAIL is an algorithm that enables autonomous agents to learn from high dimensional observations of an expert performing a task. It uses adversarial learning with a latent representation inside the discriminator network and regularizes the latent representation through mutual information constraints. This encourages the learning of features that encode information about the completion levels of the task being demonstrated. The algorithm allows for successful imitation learning while disregarding differences between the expert's and the agent's domains. DisentanGAIL divides the information of state embedding into two parts: domain information and goal-completion information. We use the official implementation and modifies the environment and the related dimensions.

\subsection{Experiments Details}

\textbf{Environment.}\,\, We do experiments on MuJoCo \cite{todorov2012mujoco} and DeepMind Control(DMC) \cite{tassa2018deepmind} Suite environment. DMC includes a collection of continuous control tasks that researchers can use to evaluate and benchmark their reinforcement learning algorithms. We use Walker Walk and Cheetah Run, and use Movable Camera to generate different views under the same motor state.

\textbf{Expert Datasets.}\,\, In MuJoCo we use SAC \cite{sac} to collect expert data and use a random policy to get prior data used in DisentanGAIL. We use Dreamer to train 500k steps and use the trained agent to collect 10 trajectories for each environments. The average return of Cheetah and Walker is around 900, and shares low variance.


\textbf{SENSOR Details}\,\, We do our experiments on 12 RTX3090s. One experiment takes about 50 GPU hours. 

\section{Hyper-parameters}
\begin{table}[htbp]
\begin{center}
\begin{small}
\setlength\tabcolsep{1.2pt}
\scalebox{0.95}{\begin{tabular}{ll}
\toprule
\textbf{Hyperparameter} & \textbf{Value}  \\
\midrule
Deterministic size & $200$ \\
Stochastic size & $30$ \\
Embedding size & $1024$ \\
Sequence length $T$ & $50$  \\ 
Batch size & $50$\\
Imagine horizon $H$ & $15$ \\
Optimizer & Adamw \\
Learning rate & $6\times 10^{-4}$ world model, $8\times 10^{-5}$ discriminator \\&$8\times 10^{-5}$ actor, $8\times 10^{-5}$ critic\\
Encoder channels & $32, 64, 64$ \\
Ensemble parameters & select 3 out of 5\\
Std coefficient & 0.1 \\
Repeat time of $a$ & $2$ \\
Grad norm clip & $100$ \\
Weight decay & $0.001$ for Cheetah Run\\
& $0.0$ for Walker Walk\\
\bottomrule
\end{tabular}}
\end{small}
\end{center}
\caption{Hyperparameters of SENSOR for all experiments.}
\label{tab:atari_hyperparameters}
\end{table}


\end{document}